\pgfplotsset{compat=1.15}
\newcounter{BMatrix}
\newcommand{\setmaxwd}[1]{%
  \eqmakebox[BM-\theBMatrix][\BMalign]{$#1$}%
}
\newtheorem{theorem}{Theorem}
\newtheorem{assumption}{Assumption}
\newtheorem{proposition}{Proposition}[section]
\newtheorem{lemma}{Lemma}
\theoremstyle{definition}
\newtheorem{definition}{Definition}
\theoremstyle{remark}
\newtheorem*{remark}{Remark}
\newcommand{\inv}{^{-1}}
\newcommand{\1}{\mathds{1}}
\newcommand{\R}{\mathbb{R}}
\newcommand{\Z}{\mathbb{Z}}
\newcommand{\ra}{\rightarrow}
\newcommand{\cd}{\cdot}
\newcommand{\ds}{\dots}
\newcommand{\cF}{\mathcal{F}}
\newcommand{\cG}{\mathcal{G}}
\newcommand{\cH}{\mathcal{H}}
\newcommand{\cL}{\mathcal{L}}
\newcommand{\cM}{\mathcal{M}}
\newcommand{\cN}{\mathcal{N}}
\newcommand{\cP}{\mathcal{P}}
\newcommand{\unif}{\mathrm{Unif}}
\newcommand{\spnorm}[1]{\left|#1\right|_\mathrm{span}}
\newcommand{\set}[1]{\left\{{#1}\right\}}
\newcommand{\floor}[1]{\left\lfloor{#1}\right\rfloor}
\newcommand{\norm}[1]{\left\|#1\right\|}
\newcommand{\norminf}[1]{\left\|#1\right\|_{\infty}}
\newcommand{\abs}[1]{\left|#1\right|}
\newcommand{\sqbk}[1]{\left[ #1 \right]}
\newcommand{\sqbkcond}[2]{\left[ #1 \middle| #2 \right]}
\newcommand{\crbk}[1]{\left( #1 \right)}
\newcommand{\bd}[1]{\mathbf{#1}}
\newcommand{\argmax}[1]{\underset{#1}{\operatorname{arg}\,\operatorname{max}}\;}
\definecolor{codegreen}{rgb}{0,0.4,0}
\definecolor{codeblue}{rgb}{0.1,0.1,0.7}
\definecolor{codegray}{rgb}{0.5,0.5,0.5}
\definecolor{codepurple}{rgb}{0.58,0,0.82}
\definecolor{backcolour}{rgb}{0.97,0.97,0.97}
\lstdefinestyle{mystyle}{
    backgroundcolor=\color{backcolour},   
    commentstyle=\color{codegreen},
    keywordstyle=\color{magenta},
    numberstyle=\tiny\color{codegray},
    stringstyle=\color{codepurple},
    basicstyle=\scriptsize\ttfamily,
    identifierstyle=\color{codeblue},
    breakatwhitespace=false,         
    breaklines=true,                 
    captionpos=b,                    
    keepspaces=true,                 
    numbers=left,                    
    numbersep=4pt,                  
    showspaces=false,                
    showstringspaces=false,
    showtabs=true,                  
    tabsize=3
}
\newcommand{\tminor}{t_\mathrm{minorize} }
\newcommand{\tmix}{t_{\mathrm{mix}}}
\title{Optimal Sample Complexity for \\
Average Reward Markov Decision Processes}
\author{Shengbo Wang}
\author{Jose Blanchet}
\author{Peter Glynn}
\affil{Department of Management Science and Engineering\\
Stanford University}
\date{}
\numberwithin{equation}{section}
\begin{document}
\maketitle

\begin{abstract}
\par We resolve the open question regarding the sample complexity of policy learning for maximizing the long-run average reward associated with a uniformly ergodic Markov decision process (MDP), assuming a generative model. In this context, the existing literature provides a sample complexity upper bound of $\widetilde O(|S||A|\tmix^2 \epsilon^{-2})$\footnote{The $\widetilde O, \widetilde \Omega, \widetilde\Theta$ hide log factors.} and a lower bound of $\Omega(|S||A|\tmix \epsilon^{-2})$.  In these expressions, $|S|$ and $|A|$ denote the cardinalities of the state and action spaces respectively, $\tmix$ serves as a uniform upper limit for the total variation mixing times, and $\epsilon$ signifies the error tolerance. Therefore, a notable gap of $\tmix$ still remains to be bridged. Our primary contribution is the development of an estimator for the optimal policy of average reward MDPs with a sample complexity of $\widetilde O(|S||A|\tmix\epsilon^{-2})$. This marks the first algorithm and analysis to reach the literature's lower bound. Our new algorithm draws inspiration from ideas in \cite{Li2020_generator_optimal}, \cite{jin_sidford2021}, and \cite{wang2023optimal}. Additionally, we conduct numerical experiments to validate our theoretical findings.

\end{abstract}

\section{Introduction}
\setcounter{theorem}{-1}

This paper offers a theoretical contribution to the area of reinforcement learning (RL) by providing the first provably optimal sample complexity guarantee for a tabular RL environment in which a controller wishes to maximize the \textit{long run average reward} governed by a Markov decision process (MDP). 
\par The landscape of RL has been illuminated by remarkable empirical achievements across a diverse spectrum of applications \citep{Kober2013RLinRoboticsSurvey,Sadeghi2016Cad2rl,Deng2017rlFinanceSignal}. As a consequence, a great deal of research effort has been channeled into RL theory and its applications within operations research and the management sciences. In many real-world scenarios, influenced by engineering and managerial considerations, the MDP model naturally unfolds within an environment where viable policies must be stable \citep{Bramson2008StabilityQ}. In these settings, the controlled Markov chain induced by a reasonable policy will typically converge in distribution to a unique steady state, regardless of the initial condition. This phenomenon is known as \textit{mixing}.  Within such modeling environments, the long run average reward emerges as a well-defined and pertinent performance measure to maximize. Its relevance is particularly pronounced in scenarios where there is an absence of an inherent time horizon or discount factor. In situations where a system exhibits fast mixing, a finite-time observation of the state process can offer a good statistical representation of its long-term behavior. As a result, it becomes reasonable to anticipate that, for such systems, a lower-complexity algorithm for policy learning is attainable.

\par Recognized as a significant and challenging open problem in RL theory, the optimal sample complexity for average reward MDPs (AMDPs) under a generative model, i.e., a sampler capable of generating new states for the controlled Markov chain conditioned on any state and action, has been extensively investigated in the literature (refer to Table \ref{tab:AMDP_SC}). In this paper, our focus is on learning an optimal policy for a \textit{uniformly ergodic} AMDP \citep{meyn_tweedie_glynn_2009, wang2023optimal}, and we attain the first optimal sample complexity bound within this context. Specifically, assuming finite state and action spaces, uniform ergodicity implies that the transition matrix $P_\pi$ induced by any stationary Markov deterministic policy $\pi$ has the property that $P_\pi^m$ converges to a matrix with identical rows in $\ell_1$ distance as $m\ra\infty$ at a geometric rate. The time constant of this geometric convergence is known as the \textit{mixing time}. The largest mixing time over all $\pi$ is an important parameter denoted by $\tmix$ (see equation \eqref{eqn:tmix_finite}). In this setting, we establish the following main result:
\begin{theorem}[Informal]\label{thm:avg_SC_0}
Assuming that the AMDP is uniformly ergodic, the sample complexity of learning a  policy that achieves a long run average reward within $\epsilon\in (0,1]$ of the optimal value with high probability is 
\[
\widetilde \Theta \crbk{\frac{|S||A|\tmix}{\epsilon^2}},
\] 
where $|S|, |A|$ are the cardinality of the state and action spaces, respectively. 
\end{theorem}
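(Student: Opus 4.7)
The statement combines a matching lower bound (already established in the literature, e.g., \cite{jin_sidford2021}) with an upper bound, so my proposal focuses on producing an estimator whose sample complexity meets $\widetilde O(|S||A|\tmix/\epsilon^2)$.

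\textbf{Overall architecture.} I would exploit the classical reduction from the average reward problem to a $\gamma$-discounted one. Fix an effective discount factor $\gamma = 1 - 1/(c\,\tmix)$ for a suitable absolute constant $c$, draw $N = \widetilde O(\tmix/\epsilon^2)$ independent transitions from the generator at each state--action pair to build an empirical kernel $\widehat P$, and let $\widehat\pi$ be the optimal policy of the $\gamma$-discounted MDP $(\widehat P, r)$, computed by a variance-reduced value iteration in the style of \cite{Li2020_generator_optimal, jin_sidford2021}. The total sample budget is $|S||A|N = \widetilde O(|S||A|\tmix/\epsilon^2)$, so it remains only to certify $\rho^{\widehat\pi} \geq \rho^* - \epsilon$ with high probability.

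\textbf{Key technical steps.} The analysis splits into two layers. First, a \emph{planning-to-average} bridge: using the Laurent decomposition $V^*_\gamma = \rho^*/(1-\gamma) + h^* + o(1)$ together with the uniform-ergodicity-based span bound $\mathrm{span}(V^*_\gamma) = O(\tmix)$ from \cite{wang2023optimal}, I would establish, via a Poisson-equation-style perturbation identity, that for any policy $\pi$ computed from $\widehat P$,
\[
\rho^* - \rho^\pi \lesssim \tmix \cdot \bigl\|(P - \widehat P)\,\widehat V\bigr\|_\infty + \frac{1}{\tmix},
\]
where the $1/\tmix$ term is the planning bias from the choice of $\gamma$ and is absorbed into $\epsilon$. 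Second, a \emph{statistical} layer: control $\|(P - \widehat P)\widehat V\|_\infty$ through an entrywise Bernstein inequality, whose variance proxy is the one-step conditional variance of $\widehat V$. Uniform ergodicity makes this proxy $O(\tmix)$ rather than the naive $O(\tmix^2)$ coming from a sup-norm argument, which is precisely what turns a $\tmix^2$ rate into the sharp $\tmix$ rate. A union bound over the $|S||A|$ state--action pairs is absorbed into the $\widetilde O(\cdot)$.

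\textbf{Main obstacle and decoupling.} The central difficulty is that $\widehat\pi$ and the associated $\widehat V$ are not independent of the samples defining $\widehat P$, so Bernstein cannot be applied naively to $(P-\widehat P)\widehat V$. To break this dependence I plan to import the absorbing-state / perturbed-reward device of \cite{Li2020_generator_optimal}: replace a candidate state with an absorbing copy and sweep the terminal reward over a finite $\epsilon$-net, producing a family of auxiliary MDPs whose optimal value functions are fixed in advance and monotone in the perturbation parameter, so that a single union bound suffices. The step I expect to require the most care is porting this construction from its native sup-norm setting into the span-seminorm regime of \cite{wang2023optimal}, since that is exactly what decides whether the leading term scales as $\tmix$ or $\tmix^2$. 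Once the decoupling goes through in the span seminorm, combining it with the variance-reduced discounted solver and the two-layer bound above closes the $\tmix$ gap and matches the lower bound, yielding the claimed $\widetilde\Theta(|S||A|\tmix/\epsilon^2)$ rate.
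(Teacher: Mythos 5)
Your architecture is essentially the paper's: reduce to a discounted MDP, solve it with the perturbed model-based planning estimator of \cite{Li2020_generator_optimal}, sharpen the statistical analysis from $(1-\gamma)^{-3}$ to $\tminor(1-\gamma)^{-2}$ using the span-seminorm/variance bounds of \cite{wang2023optimal} (in the paper, $\spnorm{v_0^{\pi}}\leq 3\tminor$ and $\norminf{v_1^\pi}\leq 80\sqrt{\tminor}/(1-\gamma)$), and handle the dependence between $\widehat P$ and $\hat\pi_0$ via the absorbing-state construction and the $\xi$-net over perturbed terminal rewards. You have correctly identified both the key obstacle (decoupling) and the key source of the improvement (one-step conditional variance controlled by mixing rather than by $(1-\gamma)^{-1}$ in sup norm), so on the main line your proposal and the paper coincide.

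There is, however, one concrete error: your choice $\gamma = 1 - 1/(c\,\tmix)$ with $c$ an \emph{absolute} constant does not work. The reduction lemma (Lemma 3 of \cite{jin_sidford2021}; Lemma \ref{lemma:disc_approx_avg_err_bd} here) gives $\norminf{(1-\gamma)v^\pi - \alpha^\pi}\leq 9(1-\gamma)\tminor$, so with your $\gamma$ the planning bias is $\Theta(1)$, independent of $\epsilon$ --- it is not ``$1/\tmix$'' and it cannot be absorbed into $\epsilon$ when $\epsilon$ is small. The discount factor must scale with the target accuracy: the paper takes $1-\gamma = \epsilon/(19\tminor)$, which makes the two bias terms $9(1-\gamma)\tminor$ each at most $9\epsilon/19$, and then requires the DMDP to be solved only to value-function accuracy $\Theta(\tminor)$ (not $\epsilon$), since the average-reward gap picks up a factor $(1-\gamma)$. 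Plugging accuracy $\tminor$ and $1-\gamma=\Theta(\epsilon/\tminor)$ into the uniformly ergodic DMDP complexity $\widetilde O\crbk{\tminor(1-\gamma)^{-2}\epsilon_{\mathrm{DMDP}}^{-2}}$ recovers $\widetilde O(\tminor/\epsilon^2)$ per state--action pair, which is the count you want; your per-pair budget $N=\widetilde O(\tmix/\epsilon^2)$ is right, but only under the $\epsilon$-dependent choice of $\gamma$. A secondary point worth flagging: with $1-\gamma=\Theta(\epsilon/\tmix)$ the minimum-sample-size requirement of the DMDP solver becomes binding, which is exactly why the paper uses the \cite{Li2020_generator_optimal} estimator (burn-in $\widetilde\Omega(|S||A|(1-\gamma)^{-1})$) rather than the variance-reduced Q-learning of \cite{wang2023optimal} (burn-in $\widetilde\Omega(|S||A|(1-\gamma)^{-3})$, which would be $\widetilde\Omega(|S||A|\tmix^3\epsilon^{-3})$ and ruin the rate); your phrase ``variance-reduced value iteration'' blurs this distinction, and you should make sure the solver you invoke has the small burn-in.
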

A rigorous version of Theorem \ref{thm:avg_SC_0} is presented in Theorem \ref{thm:avg_SC}. We highlight that this is the first optimal result in the domain of sample complexity of AMDPs, as it achieves the lower bound in \cite{jin_sidford2021} for uniformly ergodic AMDPs. 
\begin{table}[t]
\begin{minipage}{\textwidth}
\centering
\caption[AMDP]{Sample complexities of AMDP algorithms. 
When $\tmix$ appear in the sample complexity, an assumption of uniform ergodicity is being made, while the presence of $H$\footnotemark[3] is associated with an assumption that the MDP is weakly communicating.}\label{tab:AMDP_SC}
{\renewcommand{\arraystretch}{1.3}
\begin{tabular}{c|c|c}
\Xhline{1.2pt}
Algorithm &Origin & \makecell{Sample complexity\\upper bound ($\widetilde O$)} \\
\hline
Primal-dual $\pi$ learning & \cite{wang2017primaldual} & $|S||A|\tau^2\tmix^2\epsilon^{-2}$ \footnotemark[2]   \\
Primal-dual SMD\footnotemark[1] & \cite{jin2020avg_rwd_smd} & $|S||A|\tmix^2\epsilon^{-2}$\\
Reduction to DMDP\footnotemark[1] & \cite{jin_sidford2021} & $|S||A|\tmix\epsilon^{-3}$\\
Reduction to DMDP & \cite{wang2022amdp_reduction} &   $|S||A|H\epsilon^{-3}$ \\
Refined Q-learning & \cite{zhang2023sharper_amdp} &  $|S||A|H^2\epsilon^{-2}$   \\

Reduction to DMDP & This paper &   $|S||A|\tmix \epsilon^{-2}$  \\ 
\hline
{\multirow{2}{*}{Lower bound} } &  \cite{jin_sidford2021} &$\Omega (|S||A|\tmix\epsilon^{-2})$ \\
&  \cite{wang2022amdp_reduction} &$\Omega (|S||A|H\epsilon^{-2})$\\
\Xhline{1.2pt}
\end{tabular}
}
\end{minipage}
\end{table}
\footnotetext[3]{$H$ is the maximum span of optimal transient value functions. See the literature review for a discussion. }
\footnotetext[2]{Let $\eta^\pi$ be the stationary distribution of $P_\pi$, then $\tau := \max_{\pi\in\Pi}\max_{s\in S}\eta^\pi(s)/\min_{s}\eta^\pi(s)$.}
\footnotetext[1]{SMD stands for stochastic mirror descent while DMDP stands for discounted MDP. }

\subsection{Literature Review}
In this section, we review pertinent literature to motivate our methodology and draw comparisons with our contributions.
\par\textbf{Sample Complexity of Average Reward RL: } 
The relevant literature is summarized in Table \ref{tab:AMDP_SC}. It's important to note that the works mentioned in this table revolve around the concept of mixing, albeit from distinct perspectives. On one side, \cite{wang2017primaldual, jin2020avg_rwd_smd, jin_sidford2021}, and the current paper assumes a uniformly ergodic AMDP. Conversely, \citet{wang2022amdp_reduction} and \citet{zhang2023sharper_amdp} operate under a milder assumption, considering weakly communicating AMDPs as defined in \cite{Puterman1994}. Under this assumption, only the optimal average reward is guaranteed to remain independent of the initial state and is attained by a stationary Markov deterministic policy. In particular, certain policies in this setting might not lead to a mixing Markov chain, potentially rendering the policy's average reward state dependent. Consequently, the uniform mixing time upper bound $\tmix$ is infinite within such instances. In this context, a pertinent complexity metric parameter denoted as $H$ is the \textit{span semi-norm} (see \eqref{eqn:span_norm}) upper bound of transient value functions $H = \max_{\bar\pi} \spnorm{ u^{\bar\pi}}$, where the max is taken over all optimal policies and $u^{\bar\pi}$ is defined in \eqref{eqn:transient_val_func}\footnotemark[9]\footnotetext[9]{For periodic chains, the limit in \eqref{eqn:transient_val_func} is understood as the Cesaro limit, see \citet{Puterman1994}.}. As demonstrated in \cite{wang2022amdp_reduction}, it holds that $H \leq 8\tmix$ when the reward is bounded in $[0,1]$. It's important to note that $H$ depends on the specific reward function, whereas $\tmix$ does not. Therefore, the reverse inequality cannot hold, even within the realm of uniformly ergodic MDPs. This is evident when we consider a reward function that is identically 0. However, it's worth mentioning that the family of worst-case uniformly ergodic MDP instances presented in \cite{wang2023optimal} exhibits $H = \Theta (\tmix)$. See Section \ref{section:conclusion} for some more discussion. 
\par Additionally, we highlight that two existing papers in the literature, namely \cite{jin_sidford2021} and \cite{wang2022amdp_reduction}, employ the reduction to a discounted MDP (DMDP) method to facilitate policy learning for AMDPs. This paper follows the same approach. In Section \ref{section:alg_mtd}, we will offer an in-depth discussion of this methodology and a comparison with these two papers.

\par \textbf{Sample Complexity of Discounted Tabular RL: } 
In recent years, there has been substantial interest in understanding the worst-case sample complexity theory of tabular DMDPs. This research has given rise to two primary approaches: model-based and model-free. Model-based strategies involve constructing an empirical MDP model from data and employing dynamic programming (see \cite{azar2013, sidford2018near_opt, agarwal2020, li2022settling}), yielding optimal upper and lower bounds of $\widetilde \Theta(|S||A|(1-\gamma)^{-3}\epsilon^{-2})$, where $\gamma$ is the discount factor. Meanwhile, the model-free route maintains lower-dimensional statistics of the transition data, as exemplified by the iconic Q-learning \citep{Watkins1992} and its extensions. \cite{li2021QL_minmax} demonstrates that the classic Q-learning algorithm has a worst-case sample complexity of $\widetilde \Theta(|S||A|(1-\gamma)^{-4}\epsilon^{-2})$. However, \cite{wainwright2019} introduced a variance-reduced Q-learning variant that reaches the same lower bound as model-based methods. 

\par Worst-case analysis provides uniform guarantees for convergence rates across all $\gamma$-discounted MDP instances. However, the worst-case examples that achieve the lower bound must have a transition kernel or reward function that depends on $\gamma$ \citep{wang2023optimal}. In contrast, \cite{khamaru2021} focuses on \textit{instance-dependent} settings, where the transition kernel and reward function are fixed, and proves matching sample complexity upper and lower bounds. \cite{wang2023optimal} concentrates on scenarios in which the class of MDPs can encompass arbitrary reward functions, while the transition kernels are assumed to obey various mixing time upper bounds. A particular setting is when this mixing time upper bound is uniform across all policies. This specific setting aligns with the central objective of this paper: Derive optimal sample complexity theory for uniformly ergodic AMDPs.

\subsection{Algorithm Methodology}\label{section:alg_mtd}

\begin{table}[t]
\begin{minipage}{\textwidth}
\centering
\caption{Sample complexity of uniformly ergodic DMDP algorithms when $\tmix\leq (1-\gamma)\inv$, where $\gamma$ is the discount factor. }\label{tab:DMDP_SC}
{\renewcommand{\arraystretch}{1.3}
\begin{tabular}{c|c|c}
\Xhline{1.2pt}
Origin & \makecell{Sample complexity\\upper bound ($\widetilde O$)}& Minimum sample size ($\widetilde \Omega$)\\
\hline
\cite{azar2013}
&$|S||A|(1-\gamma)^{-3}\epsilon^{-2}$  
&$|S||A|(1-\gamma)^{-3}$  \\
\cite{agarwal2020}
&$|S||A|(1-\gamma)^{-3}\epsilon^{-2}$    
&$|S||A|(1-\gamma)^{-2}$ \\
\cite{Li2020_generator_optimal}
&$|S||A|(1-\gamma)^{-3}\epsilon^{-2}$    
&$|S||A|(1-\gamma)^{-1}$ \\
\hline
\cite{wang2023optimal} & $|S||A|\tmix(1-\gamma)^{-2}\epsilon^{-2}$    & $|S||A|(1-\gamma)^{-3}$\\
This paper & $|S||A|\tmix(1-\gamma)^{-2}\epsilon^{-2}$    & $|S||A|(1-\gamma)^{-1}$\\
\hline
\makecell{Lower bound\\ \cite{wang2023optimal}}  & $\tmix(1-\gamma)^{-2}\epsilon^{-2}$ \footnotemark[4]    & N/A\\
\Xhline{1.2pt}
\end{tabular}
}

\end{minipage}
\end{table}
\footnotetext[4]{The lower bound in \cite{wang2023optimal} assumes $|S||A| = O(1)$. }

\par Our approach to algorithmic design draws inspiration from \cite{jin_sidford2021}, wherein the optimal policy of a uniformly ergodic AMDP is approximated by that learned from a discounted MDP (DMDP) with a discount factor $1 - \gamma = \Theta(\epsilon/\tmix)$. This idea of approximating the AMDP by a DMDP been considered and implemented since 1970s, see \citet{Hordijk1975discount_approx_amdp}. It is known as the \textit{reduction} method in the AMDP sample complexity literature.  As shown in Table \ref{tab:AMDP_SC}, however, both \cite{jin_sidford2021} and \cite{wang2022amdp_reduction} employed this strategy and yet obtained an $\epsilon^{-3}$ dependency, deviating from the canonical $\epsilon^{-2}$ rate obtained in this paper.

\par To understand this deviation and motivate our methodology, we provide a brief discussion of the sample complexity theory for uniformly ergodic DMDPs. Prior to \cite{wang2023optimal}, the DMDP algorithm, known as perturbed model-based planning, and the analysis in \cite{Li2020_generator_optimal} achieved a sample complexity dependence on $1-\gamma$ of $\widetilde O((1-\gamma)^{-3})$. Though optimal for the worst-case DMDP, this dependence on $1-\gamma$ is sub-optimal for policy learning of uniformly ergodic DMDPs. On the other hand, the state-of-the-art algorithm and associated analysis in \cite{wang2023optimal} yield an optimal $\widetilde \Theta(\tmix (1-\gamma)^{-2})$ dependence, as displayed in Table \ref{tab:DMDP_SC}.  The sub-optimal $\epsilon^{-3}$ dependency observed in both \cite{jin_sidford2021} and \cite{wang2022amdp_reduction} can be attributed to their utilization of the $\widetilde O((1-\gamma)^{-3})$ result from \cite{Li2020_generator_optimal}, without taking into account the complexity reduction from the associated mixing assumptions.

\par Building upon the aforementioned DMDP results, our strategy is rooted in recognizing that the optimal sample complexity of uniformly ergodic DMDPs is $\widetilde \Theta(|S||A|\tmix(1-\gamma)^{-2}\epsilon^{-2})$. As shown in Table \ref{tab:DMDP_SC}, the algorithm presented in \cite{wang2023optimal} is capable of achieving this complexity. However, as indicated in the third column of Table \ref{tab:DMDP_SC}, it necessitates a minimum sample size of $\widetilde \Omega(|S||A|(1-\gamma)^{-3})$. This quantity can be interpreted as the initial setup cost of the algorithm which is indifferent to the specification of $\epsilon$. Unfortunately, $\widetilde \Omega(|S||A|(1-\gamma)^{-3})$ proves to be excessively large for our objective. For a more comprehensive discussion on this issue, see Section \ref{section:SC_DMDP}.

\par To overcome this challenge, in this paper, we successfully establish an optimal sample complexity upper bound for the algorithm proposed in \cite{Li2020_generator_optimal} in the setting of uniformly ergodic DMDPs. We achieve this while simultaneously maintaining a minimum sample size requirement of $\widetilde \Omega(|S||A|(1-\gamma)^{-1})$. This optimal sample complexity bound for uniformly ergodic DMDPs, which builds upon and enhances the findings in \cite{wang2023optimal}, is of independent theoretical significance. The formal statement is presented in Theorem \ref{thm:discounted_err_bd_and_SC}, Section \ref{section:SC_DMDP}.  This accomplishment is made possible by applying analytical techniques presented in \cite{wang2023optimal}. In conjunction with the reduction methodology outlined in \cite{jin_sidford2021}, these developments collectively result in the AMDP Algorithm \ref{alg:reduced_perturbed_mb_planning}, which attains the optimal sample complexity as outlined in Theorem \ref{thm:avg_SC_0}.

\section{Markov Decision Processes: Definitions}
We consider the setting of MDPs with finite state and action space $S$ and $A$. Let $\cP(S)$ denote the set of probability measures on $S$. An element $p\in \cP(S)$ can be seen as a row vector in $\R^{|S|}$. Let $\cM(r, P, \gamma)$ denote a discounted MDP (DMDP), where $r:S\times A\ra [0,1]$ is the reward function, $P:S\times A\ra \cP(S)$ is the transition kernel, and $\gamma \in (0,1)$ is the discount factor. Note that $P$ can be identified with a $s,a$ indexed collection of measures $\set{p_{s,a}\in\cP(S):(s,a)\in S\times A}$. We denote an average reward MDP (AMDP) with the same reward function and transition kernel by $\bar{\cM}(r, P)$. 

\par Let $\bd{H} = (|S|\times |A|)^{\Z_{\geq 0}}$ and $\cH$ the product $\sigma$-field form the underlying measureable space. Define the stochastic process $\set{(X_t,A_t),{t\geq 0}}$ by the point evaluation $X_t(h) = s_t,A_t(h) = a_t$ for all $t\geq 0$ for any $h = (s_0,a_0,s_1,a_1,\ds)\in \bd{H}$. At each time and current state $X_t$, if action $A_t$ is chosen, the decision maker will receive a reward $r(X_t,A_t)$. Then, the law of the subsequent state satisfies $\cL(X_{t+1} |X_0,A_0,\ds,X_{t},A_{t}) = p_{X_{t},A_{t}}(\cdot)$ w.p.1. 
\par It is well known that to achieve optimal decision making in the context of infinite horizon AMDPs or DMDPs (to be introduced), it suffices to consider the policy class $\Pi$ consisting of stationary, Markov, and deterministic policies; i.e. any $\pi\in \Pi$ can be seen as a function $\pi:S\ra A$. For $\pi\in \Pi$ and initial distribution $\mu\in\cP(S)$, there is a unique probability measure $Q_\mu^\pi$ on the product space  s.t. the chain $\set{(X_t,A_t),t\geq 0}$ has finite dimensional distributions
\begin{align*}
&Q_\mu^\pi(X_0 = s_0, A_0 =  a_0,\ds,A_t = a_t) \\
&= \mu(s_0)p_{s_0,\pi(s_0)}(s_1)p_{s_1,\pi(s_1)}(s_2)\ds p_{s_{t-1},\pi(s_{t-1})}(s_t)\1\set{\pi(s_i) = a_i, i\leq t},
\end{align*}
where $\1$ is the indicator function. Note that this also implies that $\set{X_t,t\geq 0}$ is a Markov chain under $Q_\mu^\pi$ with transition matrix $P_\pi$ defined by
\[
P_{\pi}(s,s') = p_{s,\pi(s)}(s'). 
\]
Also, we define $r_{\pi}$ by
\[
r_{\pi}(s) = r(s,\pi(s)). 
\]
\subsection{Discounted MDPs}
For $\pi\in\Pi$, let $E_\mu^\pi$ denote the expectation under under $Q_\mu^\pi$. For $\mu$ with full support $S$, the \textit{discounted value function} $v^{\pi}(s)$ of a DMDP is defined via
\[
v^{\pi}(s) := E_\mu^{\pi}\sqbkcond{\sum_{t=0}^\infty \gamma^{t}r(X_t,A_t) }{X_0 = s}.
\]
It can be seen as a vector $v^\pi\in\R^{|S|}$, and can be computed using the formula $v^\pi = (I-\gamma P_\pi)\inv r_\pi.$
The optimal discounted value function is defined as 
\[
v^*(s) := \max_{\pi\in\Pi} v^\pi(s), \quad s\in S.
\]
For probability measure $p$ on $S$, let $p[v]$ denote the sum $\sum_{s\in S}p(s)v(s)$. It is well known that $v^*$ is the unique solution of the following \textit{Bellman equation}:
\begin{equation}\label{eqn:Bellman_eqn_v}
v^*(s) = \max_{a\in A}\crbk{r(s,a) + \gamma p_{s,a}[v^*]}.
\end{equation}
Moreover, $\pi^*(s)\in \arg\max_{a\in A}\crbk{r(s,a) + \gamma p_{s,a}[v^*]}$
is optimal and hence $v^* = (I-\gamma P_{\pi^*})\inv r_{\pi^*}$.

\subsection{Average Reward MDP}
As explained in the introduction, in this paper, we assume that the MDP of interest is uniformly ergodic. That is, for all $\pi\in\Pi$, $P_\pi$ is uniformly ergodic. By \cite{wang2023optimal}, this is equivalent to the setting in   \citep{wang2017primaldual,jin_sidford2021} where the authors assume
\begin{equation}\label{eqn:tmix_finite}
\tmix:= \max_{\pi\in\Pi} \inf\set{ m \geq 1:\max_{s\in S} \norm{P_\pi^ m (s,  \cd ) - \eta_\pi(\cd)}_1\leq \frac{1}{2}} < \infty. 
\end{equation}
Here $\eta_\pi(\cd)$ is the stationary distribution of $P_{\pi}$ and $ \norm{\cd}_1$ is $\ell_1$ distance between two probability vectors. Further, this is also shown in \cite{wang2023optimal} to be equivalent to the following assumption:
\begin{assumption}[Uniformly Ergodic MDP]\label{assump:unif_ergodic}
For any $\pi\in\Pi$, there exists $m\geq 1, q\leq 1$ and probability measure $\psi\in\cP(S)$ such that for all $s\in S$, $P_\pi^{m}(s,\cd) \geq q \psi(\cd). $
\end{assumption}
Here, the notation $P_\pi^{m}$ denotes the $m$th power of the matrix $P_\pi$. Assumption \ref{assump:unif_ergodic} is commonly referred to, in the general state-space Markov chain literature, as $P_\pi$ satisfying the \textit{Doeblin condition} \citep{meyn_tweedie_glynn_2009}. In the context of Assumption \ref{assump:unif_ergodic}, we define the minorization time as follows. 
\begin{definition}[Minorization Time]
Define the minorization time for a uniformly ergodic kernel $P_\pi$ as
\[
\tminor(P_\pi) :=  \inf\set{m/q: \inf_{s\in S} P_\pi^m(s,\cd )\geq q\psi(\cd) \text{ for some }\psi\in\cP(S)}.
\]
Define the minorization time for the uniformly ergodic MDP as
\[
\tminor:= \max_{\pi\in\Pi}\tminor(P_\pi). 
\]
\end{definition}
Since $\Pi$ is finite, the above $\max$ is always achieved, and hence $\tminor < \infty$. Moreover, Theorem 1 of \cite{wang2023optimal} shows that $\tminor$ and $\tmix$ are equivalent up-to constant factors: 
\begin{equation}\label{eqn:tmix_tminor_const_factor}
\tminor\leq 22 \tmix \leq 22\log(16)\tminor. 
\end{equation}
Therefore, the subsequent complexity dependence written in terms of $\tminor$ can be equivalently expressed using $\tmix$. 

\par Under Assumption \ref{assump:unif_ergodic}, for any initial distribution $X_0\sim\mu$, the \textit{long run average reward} of any policy $\pi\in\Pi$ is defined as
\[
\alpha^\pi:= \lim_{T\ra\infty}\frac{1}{T} E_\mu^\pi\sqbk{\sum_{t=0}^ {T-1} r(X_t,A_t)}
\]
where the limit always exists and doesn't depend on $\mu$. The long run average reward $\alpha^\pi$ can be characterized via any solution pair $(u,\alpha)$, $u:S\ra \R$ and $\alpha\in\R$ to the \textit{Poisson's equation},
\begin{equation}\label{eqn:pois_eqn}
r_\pi - \alpha = (I-P_\pi)u. 
\end{equation}
Under Assumption \ref{assump:unif_ergodic}, a solution pair $(u,\alpha)$ always exists and is unique up to a shift in $u$; i.e. $\set{(u + ce,\alpha) :c\in\R}$, where $e(s) = 1, \forall s\in S$, are all the solution pairs to \eqref{eqn:pois_eqn}. In particular, for any $\pi\in\Pi$, define the \textit{transient value function} a.k.a. the \textit{bias} as
\begin{equation}\label{eqn:transient_val_func}
u^{ \pi}(s) := \lim_{T\ra\infty} E_{s}^{\pi}\sqbk{\sum_{t = 0}^{T-1} (r_{\pi}(X_t) -\alpha^{\pi})}
\end{equation}
where the limit always exists. Then, $(u^{\pi},\alpha^\pi)$ is the unique up to a shift solition to \eqref{eqn:pois_eqn}. 
\par Similar to DMDPs, define the optimal long run average reward $\bar \alpha$ as
\[
\bar \alpha := \max_{\pi\in\Pi}\alpha^\pi.
\]
Then, for any $\bar \pi$ that achieve the above maximum, $(u^{\bar\pi}, \bar \alpha)$ solves $r_{\bar\pi} - \bar\alpha = (I-P_{\bar\pi})u^{\bar\pi}.$

\section{Optimal Sample Complexities under a Generative Model}

\par In this section, we aim to develop an algorithm for AMDPs that achieves the sample complexity as presented in Theorem \ref{thm:avg_SC_0}. The randomness used by the algorithm arises from an underlying probability space $(\Omega,\cF,P)$. We proceed under the assumption that we have access to a \textit{generative model}, or sampler, which allows us to independently draw samples of the subsequent state from the transition probability $\set{p_{s,a}(s'): s'\in S}$ given any state action pair $(s,a)$. 

\par As explained in Section \ref{section:alg_mtd}, our methodology closely aligns with the approach introduced in \cite{jin_sidford2021}. We leverage the policy acquired from a suitably defined DMDP as an approximation to the optimal policy for the AMDP. However, prior to this work, the state-of-the-art DMDP algorithms, along with the accompanying worst-case sample complexity analysis, fall short in achieving the optimal sample complexity articulated in Theorem \ref{thm:avg_SC_0} \citep{jin_sidford2021, wang2022amdp_reduction}. This limitation emanates from the fact that the sample complexity of uniformly ergodic DMDPs has a dependence $(1-\gamma)^{-2}$, as demonstrated in the preceding research \cite{wang2023optimal}. This is a notable improvement over the $(1-\gamma)^{-3}$ dependence associated with the worst-case-optimal DMDP theory \citep{Li2020_generator_optimal}, a foundation upon which \cite{jin_sidford2021} and \cite{wang2022amdp_reduction} were constructed. Consequently, to attain the desired complexity result, our initial step involves establishing enhanced sample complexity assurances for uniformly ergodic DMDPs.

\subsection{A Sample Efficient Algorithm for Uniformly Ergodic DMDPs}\label{section:SC_DMDP}
\par We recognize that the optimal sample complexity of uniformly ergodic DMDPs should be $\widetilde \Theta(|S||A|\tminor(1-\gamma)^{-2}\epsilon^{-2})$; c.f. \citep{wang2023optimal}. Regrettably, as mentioned earlier, the algorithm introduced in \cite{wang2023optimal}, while indeed capable of achieving this sample complexity, falls short in terms of the minimum sample size as it requires $n = \widetilde \Omega(|S||A|(1-\gamma)^{-3})$ for its execution, a complexity that is too large for our purposes.

\par To elaborate on this issue, we note that the algorithm introduced in \cite{wang2023optimal} constitutes a variant of the variance-reduced Q-learning \citep{wainwright2019}. This family of algorithms necessitates a minimum sample size of $n = \widetilde \Omega(|S||A|(1-\gamma)^{-3})$, as per existing knowledge. Remarkably, model-based algorithms can achieve significantly smaller minimal sample sizes: e.g. $n = \widetilde \Omega(|S||A|(1-\gamma)^{-2})$ in \cite{agarwal2020}, and the full-coverage $n = \widetilde \Omega(|S||A|(1-\gamma)^{-1})$ in \cite{Li2020_generator_optimal}. See Table \ref{tab:DMDP_SC}. This comparison motivates our adoption of the algorithm introduced in \cite{Li2020_generator_optimal} and to draw upon the techniques utilized for analyzing mixing MDPs from \cite{wang2023optimal}. The synergistic combination of these ideas, as elucidated in Theorem \ref{thm:discounted_err_bd_and_SC}, results in an algorithm with the optimal sample complexity of $\widetilde \Theta(|S||A|\tminor(1-\gamma)^{-2}\epsilon^{-2})$ and minimum sample size $n = \widetilde \Omega(|S||A|(1-\gamma)^{-1})$ at the same time.
\par In this section, our analysis suppose Assumption \ref{assump:unif_ergodic}, $ \tminor\leq (1-\gamma)\inv$, and $\gamma\geq \frac{1}{2}$. 
\subsubsection{The DMDP Algorithm and its Sample Complexity}
\par We introduce the perturbed model-based planning algorithm for DMDPs in \cite{Li2020_generator_optimal}. Let $\zeta > 0$ be a design parameter that we will specify later. Consider a random perturbation of the reward function
\[
R(s,a) := r(s,a)+ Z(s,a), \quad (s,a)\in S\times A
\]
where the random element $Z:S\times A\ra [0,\zeta]$
\begin{equation}\label{eqn:Z_zeta_def}
Z(s,a)\sim \unif(0,\zeta)
\end{equation}
i.i.d. $\forall (s,a)\in S\times A$. The reason to consider a perturbed reward with amplitude parameter $\zeta$ is to ensure that the optimality gap of the optimal policy, compared with any other suboptimal policy, is sufficiently large. To accomplish this, it is not necessary to assume uniform distributions of $Z(s,a)$. However, we opt for this choice for the sake of convenience. This reward perturbation technique is well motivated in \cite{Li2020_generator_optimal}. So, we refer the readers to this paper for a detailed exposition. Then, the perturbed model-based planning algorithm therein is formulated in Algorithm  \ref{alg:perturbed_mb_planning}. 
\begin{algorithm}[ht]
   \caption{Perturbed Model-based Planning \citep{Li2020_generator_optimal}: $\text{PMBP}(\gamma,\zeta,n)$}
   \label{alg:perturbed_mb_planning}
\begin{algorithmic}
   \STATE {\bfseries Input:} Discount factor $\gamma\in(0,1)$. Perturbation amplitude $\zeta > 0$. Sample size $n\geq 1$. 
   \STATE Sample $Z$ as in \eqref{eqn:Z_zeta_def} and compute $R = r + Z$. 
   \STATE  Sample i.i.d. $S_{s,a}^{(1)}, S_{s,a}^{(2)},\ds ,S_{s,a}^{(n)}$, for each $(s,a)\in S\times A$. Then, compute the \textit{empirical kernel} $\widehat P:= \set{\hat p_{s,a}(s'):(s,a)\in S\times A,s'\in S}$ where 
    \[
    \hat p_{s,a} (s') := \frac{1}{n}\sum_{i=1}^n\1\set{S_{s,a}^{(i)}= s'}, \quad (s,a)\in S\times A.
    \]
    \STATE Compute the solution $\hat v_0$ to the empirical version of the Bellman equation \eqref{eqn:Bellman_eqn_v}; i.e. $\forall s\in S$,
    $\hat v_0(s) = \max_{a\in A}\crbk{R(s,a) + \gamma \hat p_{s,a}[\hat v_0]}$. Then, extract the greedy policy
    \[
    \hat \pi_0(s)\in \argmax{a\in A}\crbk{R(s,a) + \gamma \hat p_{s,a}[\hat v_0]}, \quad s\in S.
    \]
   \RETURN $ \hat\pi_0$. 
\end{algorithmic}
\end{algorithm}
\par By \eqref{eqn:Bellman_eqn_v}, $\hat\pi_0$ returned by Algorithm \ref{alg:perturbed_mb_planning} optimal for the DMDP instance $\cM(R, \widehat P,\gamma)$. Also, computing $\hat v_0$ therein involves solving a fixed point equation in $\R^{|S|}$. This can be done (with machine precision) by performing value or policy iteration using no additional samples. 
\par In summary, \cite{Li2020_generator_optimal} proposed to use $\hat\pi_0$, the optimal policy of the reward-perturbed empirical DMDP $\cM(R, \widehat P,\gamma)$, as the estimator to the optimal policy $\pi^*$ of the DMDP $\cM(r, P,\gamma)$. They proved that this procedure achieves a sample complexity upper bound of $\widetilde \Theta(|S||A|(1-\gamma)^{-3}\epsilon^{-2})$ over all DMDPs (not necessarily uniformly ergodic ones) with a minimum sample size requirement $n = \widetilde \Omega(|S||A|(1-\gamma)^{-1})$. We are able to improve the analysis and achieve an accelerated result of independent interest in the context of uniformly ergodic DMDPs. Before introducing our results, let us define two parameters. Recall $\zeta$ from \eqref{eqn:Z_zeta_def}. For prescribed error probability $\delta\in(0,1)$, define
\[
\beta_\delta(\eta) = 2\log \crbk{\frac{24|S||A|\log_2((1-\gamma)\inv)}{(1-\gamma)^2\eta\delta}} \text{ and } \eta_\delta^* = \frac{\zeta \delta (1-\gamma)}{9|S||A|^2}.
\]
The reason for defining $\beta_\delta(\cd)$ and $\eta_\delta^*$ will become clear when we introduce the intermediate results in Proposition \ref{prop:hatvhatpi-vhatpi_bd} and \ref{prop:eta_choice} in the Appendix. Now, we are ready to state improved error and sample complexity bounds for Algorithm \ref{alg:perturbed_mb_planning} that achieve minmax optimality for uniformly ergodic DMDPs:
\begin{theorem}\label{thm:discounted_err_bd_and_SC}
Suppose Assumption \ref{assump:unif_ergodic} is in force. Then, for any $\gamma\in[1/2,1)$, $\zeta > 0$, and $n\geq 64\beta_\delta(\eta_\delta^*)(1-\gamma)\inv$, the policy $\hat\pi_0$ returned by Algorithm \ref{alg:perturbed_mb_planning} $\text{PMBP}(\gamma,\zeta,n)$ satisfies 
\begin{equation}\label{eqn:thm1_discounted_err_bd}
0\leq v^{*} - v^{\hat\pi_0} \leq \frac{2\zeta}{1-\gamma} + 486\sqrt{\frac{\beta_\delta(\eta_\delta^*)\tminor }{(1-\gamma)^2n}}
\end{equation}
w.p. at least $1-\delta$. Consequently, choose $\zeta = (1-\gamma)\epsilon/4$, 
the sample complexity to achieve an error $0 < \epsilon\leq \sqrt{\tminor/(1-\gamma)}$ w.p. at least $1-\delta$ is
\[
\widetilde O\crbk{\frac{|S||A|\tminor}{(1-\gamma)^2\epsilon^2}}. 
\]
\end{theorem}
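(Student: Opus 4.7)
The plan is to decompose the error $v^* - v^{\hat\pi_0}$ into a reward-perturbation bias and a statistical-estimation error, and then to sharpen the statistical error from the worst-case $(1-\gamma)^{-3}$ rate of \cite{Li2020_generator_optimal} to the uniformly-ergodic $\tminor(1-\gamma)^{-2}$ rate by substituting the span-based variance estimates of \cite{wang2023optimal} into Li et al.'s analysis. Throughout, let $\tilde v^\pi$ denote the value of $\pi$ in the perturbed-reward, true-transition MDP $\cM(R,P,\gamma)$ and let $\hat v^\pi$ denote its value in the perturbed-reward, empirical-transition MDP $\cM(R,\widehat P,\gamma)$. Because $R - r\in[0,\zeta]$ componentwise, monotonicity of value in reward gives $0\leq \tilde v^\pi - v^\pi \leq \zeta/(1-\gamma)$ for every $\pi$, and an identical estimate holds between $\hat v^\pi$ and the value of $\pi$ in $\cM(r,\widehat P,\gamma)$. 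Applied twice, these give the $2\zeta/(1-\gamma)$ perturbation contribution that appears in \eqref{eqn:thm1_discounted_err_bd}; the lower bound $v^* - v^{\hat\pi_0}\geq 0$ is immediate from optimality of $\pi^*$.

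For the statistical term, I would rely on the classical performance-difference identity
\[
\tilde v^{\tilde\pi^*} - \tilde v^{\hat\pi_0} \leq (I - \gamma P_{\hat\pi_0})^{-1}(\widehat P_{\hat\pi_0} - P_{\hat\pi_0})\hat v^{\hat\pi_0} + \text{lower-order terms},
\]
where $\tilde\pi^*$ is the optimal policy in $\cM(R,P,\gamma)$, and the lower-order terms quantify the mismatch between $\hat v^{\hat\pi_0}$ and $\tilde v^{\hat\pi_0}$ (this should be the content of the forthcoming Proposition \ref{prop:hatvhatpi-vhatpi_bd}). A Bernstein concentration inequality bounds each coordinate of $(\widehat p_{s,a} - p_{s,a})v$ by $\sqrt{\var_{p_{s,a}}(v)/n}$ plus a range correction. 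The decisive input from \cite{wang2023optimal} is that, under Assumption \ref{assump:unif_ergodic} together with $\tminor\leq(1-\gamma)^{-1}$, every discounted value function enjoys the sharpened span bound $\spnorm{v^\pi}\lesssim \tminor$, so that $\var_{p_{s,a}}(v)$ is controlled by $O(\tminor^2)$ rather than the naive $O((1-\gamma)^{-2})$. Reweighting by the resolvent $(I-\gamma P_{\hat\pi_0})^{-1}$, whose rows sum to $(1-\gamma)^{-1}$, then yields the advertised $\sqrt{\beta_\delta(\eta_\delta^*)\tminor/((1-\gamma)^2 n)}$ rate; the hypothesis $n\geq 64\beta_\delta(\eta_\delta^*)(1-\gamma)^{-1}$ is precisely what is needed for the Bernstein range term to be dominated by the variance term.

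The hard part is that $\hat\pi_0$ is a measurable function of the same samples that define $\widehat P$, so Bernstein cannot be applied directly to $(\widehat P - P)\hat v^{\hat\pi_0}$. Here I would exploit the reward perturbation, which is the whole purpose of the uniform shift $Z\sim\unif(0,\zeta)$. Following the strategy of \cite{Li2020_generator_optimal}, I would first establish (this is what Proposition \ref{prop:eta_choice} is expected to formalize) that $Z$ induces, with probability at least $1-\delta/2$, a uniform suboptimality gap of at least $\eta_\delta^*$ between $\hat\pi_0$ and every other policy in the empirical perturbed MDP. Combined with Li et al.'s absorbing-MDP construction --- which, for each $(s,a)$, replaces the row $\widehat p_{s,a}$ by an atom with a frozen value, producing a surrogate MDP that is independent of the $(s,a)$ samples and whose optimal value agrees with $\hat v^{\hat\pi_0}$ up to a tolerance of order $\eta_\delta^*$ --- the problem reduces to a union bound over a discrete net of size $\widetilde O(|S||A|/((1-\gamma)^2\eta_\delta^*))$, whose logarithm is exactly $\beta_\delta(\eta_\delta^*)$. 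This decoupling argument fused with the mixing-based variance bound is the technical core of the proof and where I expect the main difficulty to lie.

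Finally, to derive the sample complexity from \eqref{eqn:thm1_discounted_err_bd}, I would substitute $\zeta = (1-\gamma)\epsilon/4$, so the perturbation contributes $\epsilon/2$; noting that $\beta_\delta(\eta_\delta^*) = \widetilde O(1)$ under this choice of $\zeta$ and that $\epsilon \leq \sqrt{\tminor/(1-\gamma)}$ ensures the minimum-sample-size hypothesis is automatically satisfied, I would solve the statistical term for $n$ to match the remaining $\epsilon/2$, obtaining the $\widetilde O(|S||A|\tminor/((1-\gamma)^2\epsilon^2))$ bound claimed in the theorem.
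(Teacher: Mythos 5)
Your high-level architecture coincides with the paper's: the $2\zeta/(1-\gamma)$ perturbation bias via monotonicity, the decoupling of $\hat\pi_0$ from $\widehat P$ through the absorbing-MDP surrogate and a net of size $\widetilde O(|S||A|(1-\gamma)^{-2}/\eta_\delta^*)$ whose logarithm is $\beta_\delta(\eta_\delta^*)$, the suboptimality-gap event guaranteed by the uniform reward shift, and the final substitution $\zeta=(1-\gamma)\epsilon/4$ together with the observation that $\epsilon\leq\sqrt{\tminor/(1-\gamma)}$ makes the burn-in condition automatic. All of that is consistent with the paper's proof via Propositions \ref{prop:hatvhatpi-vhatpi_bd} and \ref{prop:eta_choice}.

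There is, however, a genuine gap in the one quantitative step you spell out, and it is the step the theorem turns on. You claim that $\spnorm{v^\pi}\lesssim\tminor$ gives $\var_{p_{s,a}}(v)=O(\tminor^2)$, and that multiplying the Bernstein deviation $\sqrt{\var/n}=O(\tminor/\sqrt n)$ by the resolvent's row sums $(1-\gamma)^{-1}$ ``yields the advertised rate.'' It does not: that computation produces an error of order $\tminor(1-\gamma)^{-1}n^{-1/2}=\sqrt{\beta\tminor^2/((1-\gamma)^2n)}$, a factor $\sqrt{\tminor}$ larger than the claimed $\sqrt{\beta\tminor/((1-\gamma)^2n)}$, and would only give a $\tminor^2(1-\gamma)^{-2}\epsilon^{-2}$ sample complexity --- after the reduction, a $\tminor^2$ AMDP bound, i.e.\ exactly the suboptimality the paper is removing. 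The missing ingredient is the law-of-total-variance estimate from \cite{wang2023optimal} (Corollary 6.2.1 there, invoked in the proof of Lemma \ref{lemma:bernstein_implied_err_bd}): $\norminf{(I-\gamma P_\pi)\inv\sigma_\pi(v^\pi)}\leq 80\sqrt{\tminor}/(1-\gamma)$, i.e.\ the standard deviations aggregated along the trajectory grow like $\sqrt{\tminor}/(1-\gamma)$ rather than the pointwise product $\tminor\cdot(1-\gamma)\inv$; the span bound $\sigma_\pi(v^\pi)\leq\spnorm{v^\pi}\leq 3\tminor$ is used only to control the lower-order Bernstein range term. Moreover, since the resolvent that actually arises is the empirical one $(I-\gamma\widehat P)\inv$, this estimate cannot be applied directly: the paper routes it through the recursive sequence $h_l^\pi=\sigma_\pi(v_{l-1}^\pi)$, $v_l^\pi=(I-\gamma P_\pi)\inv h_l^\pi$ of \eqref{eqn:aux_val_seq} with $l^*=\floor{\frac12\log_2((1-\gamma)\inv)}$ levels, which is also what keeps the minimum sample size at $\widetilde\Omega((1-\gamma)\inv)$. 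Your proposal defers to ``Li et al.'s analysis'' for this machinery, but as written the variance accounting loses a $\sqrt{\tminor}$ and does not establish \eqref{eqn:thm1_discounted_err_bd}.
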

The proof of Theorem \ref{thm:discounted_err_bd_and_SC} is deferred to Appendix \ref{a_sec:ue_dmdp_analysis}. 
\begin{remark}
Compare to the worst case result in, e.g., \cite{azar2013} and \cite{Li2020_generator_optimal}, Theorem \ref{thm:discounted_err_bd_and_SC} replaces a power of $(1-\gamma)\inv$ by $\tminor$ in \eqref{eqn:thm1_discounted_err_bd}. The implied sample complexity upper bound matches the lower bound in \cite{wang2023optimal} up to log factors. So, $\widetilde O$ can be replaced by $\widetilde \Theta$. Also, this is achieved with a minimum sample size $n= 64\beta_\delta(\eta_\delta^*)(1-\gamma)\inv$. This and the mixing time equivalence \eqref{eqn:tmix_tminor_const_factor} confirms the sample complexity claim in Table \ref{tab:DMDP_SC}. 
\end{remark}

\subsection{An Optimal Sample Complexity Upper Bound for AMDPs}
Using the sample complexity upper bound in Theorem \ref{thm:discounted_err_bd_and_SC}, we then adapt the reduction procedure considered in \cite{jin_sidford2021}. This leads to an algorithm that learns an $\epsilon$-optimal policy for any AMDP satisfying Assumption \ref{assump:unif_ergodic} with the optimal sample complexity. Concretely, we run the reduction and perturbed model-based planning Algorithm \ref{alg:reduced_perturbed_mb_planning}. 
\begin{algorithm}[ht]
   \caption{Reduction and Perturbed Model-based Planning}
   \label{alg:reduced_perturbed_mb_planning}
\begin{algorithmic}
\STATE {\bfseries Input:} Error tolerance $\epsilon\in (0,1]$. 
\STATE Assign 
\[
\gamma = 1- \frac{\epsilon}{19\tminor},\quad \zeta = \frac{1}{4}(1-\gamma)\tminor, \quad \text{and } n = \frac{c \beta_\delta(\eta_\delta^*)}{(1-\gamma)^2\tminor}
\]
where $c = 4\cd 486^2$. 
\STATE Run Algorithm \ref{alg:perturbed_mb_planning} with parameter specification $\text{PMBP}(\gamma,\zeta,n)$ and obtain output $\hat \pi_0$. 
\RETURN $\hat \pi_0$. 
\end{algorithmic}
\end{algorithm} 

This algorithm has the following optimal sample complexity guarantee: 
\begin{theorem}\label{thm:avg_SC}
Suppose Assumption \ref{assump:unif_ergodic} is in force. The policy $\hat\pi_0$ output by Algorithm \ref{alg:reduced_perturbed_mb_planning} satisfies $0 \leq \bar \alpha - \alpha^{\hat \pi_0}\leq \epsilon$ w.p. at least $1-\delta$. Moreover, the total number of samples used is
\[
\widetilde O\crbk{\frac{|S||A| \tminor}{\epsilon^2 }}.
\]
\end{theorem}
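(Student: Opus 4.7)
The plan is to derive the AMDP guarantee from Theorem \ref{thm:discounted_err_bd_and_SC} by means of an average-to-discounted reduction inequality. Before invoking Theorem \ref{thm:discounted_err_bd_and_SC}, I would verify that the parameters prescribed in Algorithm \ref{alg:reduced_perturbed_mb_planning} satisfy its hypotheses: with $1-\gamma = \epsilon/(19\tminor)$ and $\epsilon\in(0,1]$, one has $(1-\gamma)\tminor = \epsilon/19\leq 1$, so $\tminor \leq (1-\gamma)\inv$; the assumption $\gamma\geq 1/2$ is granted for small $\epsilon$, and one may reduce to this case by adjusting absolute constants; substituting $n = c\beta_\delta(\eta_\delta^*)/((1-\gamma)^2\tminor)$ with $c=4\cdot 486^2$ into the requirement $n\geq 64\beta_\delta(\eta_\delta^*)(1-\gamma)\inv$ reduces to $(1-\gamma)\tminor\leq c/64$, which obviously holds. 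Theorem \ref{thm:discounted_err_bd_and_SC} then yields, with probability at least $1-\delta$,
\[
0\leq v^*(s) - v^{\hat\pi_0}(s)\leq \frac{2\zeta}{1-\gamma} + 486\sqrt{\frac{\beta_\delta(\eta_\delta^*)\tminor}{(1-\gamma)^2 n}}, \quad \forall s \in S.
\]

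The key auxiliary result I need is the reduction inequality
\[
\abs{\alpha^\pi - (1-\gamma)v^\pi(s)} \leq C_0(1-\gamma)\tminor, \quad \forall \pi\in\Pi,\; s\in S,
\]
for an absolute constant $C_0$. I would derive it from Poisson's equation \eqref{eqn:pois_eqn}: substituting $r_\pi = \alpha^\pi e + (I-P_\pi)u^\pi$ into $v^\pi = (I-\gamma P_\pi)\inv r_\pi$ gives
\[
v^\pi = \frac{\alpha^\pi}{1-\gamma}e + u^\pi - (1-\gamma)(I-\gamma P_\pi)\inv P_\pi u^\pi.
\]
Shifting $u^\pi$ so that $\eta_\pi[u^\pi] = 0$ together with the bound $\spnorm{u^\pi}\leq H\leq 8\tmix$ cited in the introduction and the equivalence of $\tmix$ and $\tminor$ in \eqref{eqn:tmix_tminor_const_factor} controls $\norm{u^\pi}_\infty$ by $O(\tminor)$, and the geometric mixing of $P_\pi$ under Assumption \ref{assump:unif_ergodic} controls the residual $(1-\gamma)(I-\gamma P_\pi)\inv P_\pi u^\pi$ by the same order, yielding the lemma.

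Applying the reduction lemma to both $\hat\pi_0$ and any optimal AMDP policy $\bar\pi$, and using $v^{\bar\pi}\leq v^*$, gives
\[
\bar\alpha - \alpha^{\hat\pi_0} \leq (1-\gamma)\crbk{v^*(s) - v^{\hat\pi_0}(s)} + 2C_0(1-\gamma)\tminor
\]
for any fixed $s$. Substituting the DMDP bound together with $\zeta = (1-\gamma)\tminor/4$ and $n = c\beta_\delta(\eta_\delta^*)/((1-\gamma)^2\tminor)$ reduces every term on the right-hand side to a multiple of $(1-\gamma)\tminor = \epsilon/19$; the constants $19$ and $c = 4\cdot 486^2$ are chosen precisely so that the sum is at most $\epsilon$. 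The total sample count is $n|S||A|$, and
\[
n|S||A| = \frac{c\beta_\delta(\eta_\delta^*)|S||A|}{(1-\gamma)^2\tminor} = \widetilde O\crbk{\frac{|S||A|\tminor}{\epsilon^2}}
\]
since $\beta_\delta(\eta_\delta^*)$ is polylogarithmic and $(1-\gamma)^{-2}/\tminor = 19^2\tminor/\epsilon^2$.

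The hard part will be the reduction inequality. While the Poisson expansion above is standard, bounding the residual $(1-\gamma)(I-\gamma P_\pi)\inv P_\pi u^\pi$ uniformly in $\pi$ by $O((1-\gamma)\tminor)$ with an \emph{explicit} constant small enough that the factor $19$ in $1-\gamma$ closes the argument is delicate: a crude bound that merely invokes $\norm{u^\pi}_\infty$ carries an unwanted multiplicative constant through the $\tmix$-to-$\tminor$ conversion, so one likely needs to exploit the geometric decay of $P_\pi^t u^\pi$ directly via the Doeblin minorization in Assumption \ref{assump:unif_ergodic}. Once this lemma is established with suitable constants, the remainder of the proof is arithmetic.
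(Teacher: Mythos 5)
Your proposal is correct and follows essentially the same route as the paper: invoke Theorem \ref{thm:discounted_err_bd_and_SC} with the prescribed $\gamma,\zeta,n$ to get $v^*-v^{\hat\pi_0}\leq\tminor$, combine it with the reduction inequality $\norminf{(1-\gamma)v^\pi-\alpha^\pi}\leq C_0(1-\gamma)\tminor$ applied to both $\bar\pi$ and $\hat\pi_0$, and close the arithmetic using $1-\gamma=\epsilon/(19\tminor)$. The only difference is that the step you flag as the hard part is not proved in the paper at all: the reduction inequality is simply imported as Lemma \ref{lemma:disc_approx_avg_err_bd} (with explicit constant $C_0=9$) from Lemma 3 of \cite{jin_sidford2021} and Theorem 1 of \cite{wang2023optimal}, whereas your Poisson-equation derivation would reprove it from scratch.
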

\par The proof of Theorem \ref{thm:avg_SC} is deferred to Appendix \ref{a_sec:opt_SC_AMDP}. 
\begin{remark}
This achieves the minmax lower bound in  \cite{jin_sidford2021} up to log factors. So, $\widetilde O$ can be replaced by $\widetilde \Theta$. This and the equivalence relationship \eqref{eqn:tmix_tminor_const_factor} between $\tminor$ and $\tmix$ is a formal statement of Theorem \ref{thm:avg_SC_0}. 
\end{remark}

\section{Numerical Experiments}

\begin{figure}[t]
    \centering
    \begin{subfigure}{0.49\textwidth}
        \centering
        \includegraphics[width=\textwidth]{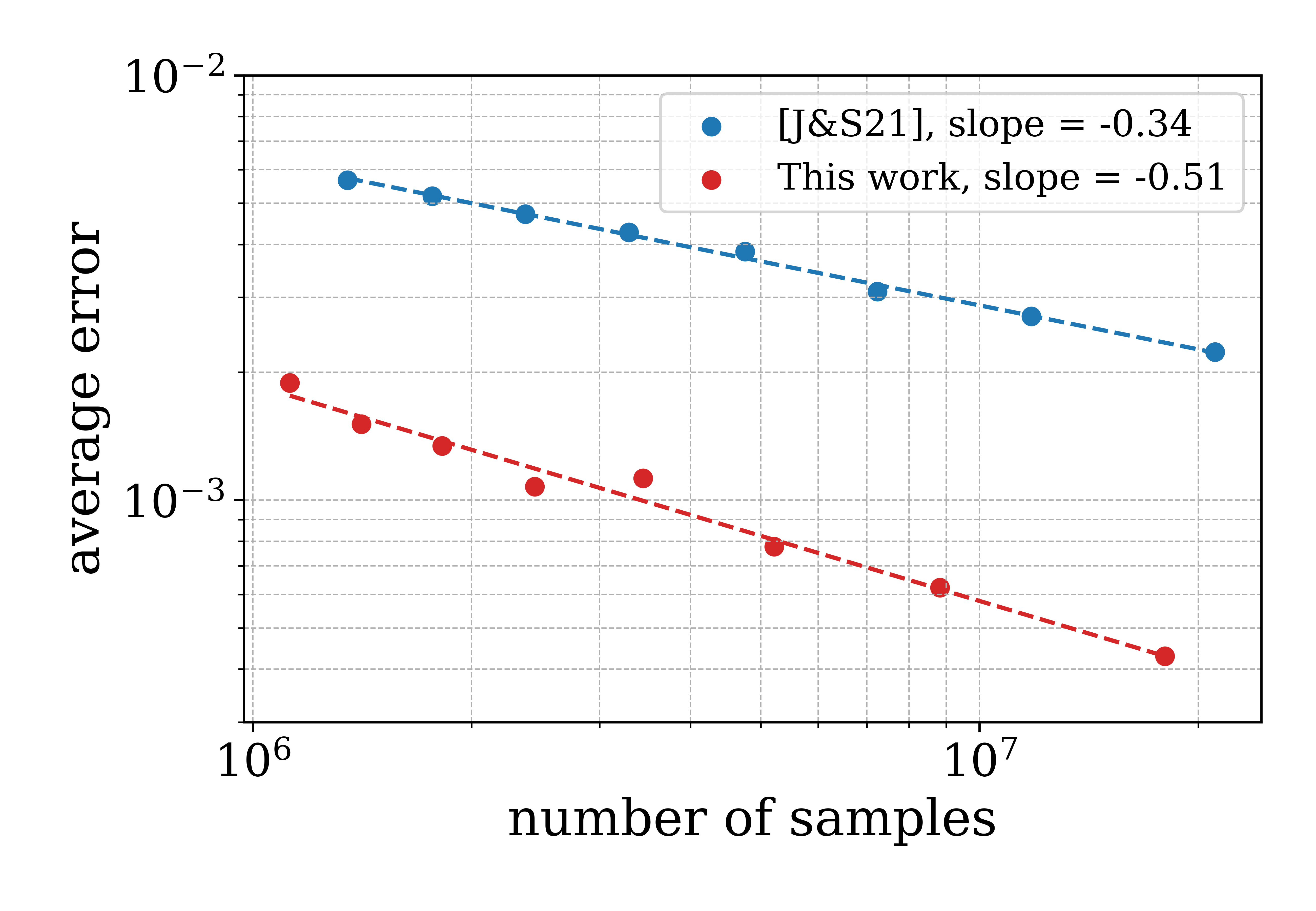}
        \caption{Convergence rate comparison with \cite{jin_sidford2021}. A $-0.5$ slope verifies the $\widetilde O(\epsilon^{-2})$ dependence. }\label{fig:test_eps}
    \end{subfigure}
    \hfill
    \begin{subfigure}{0.49\textwidth}
        \centering
        \includegraphics[width=\textwidth]{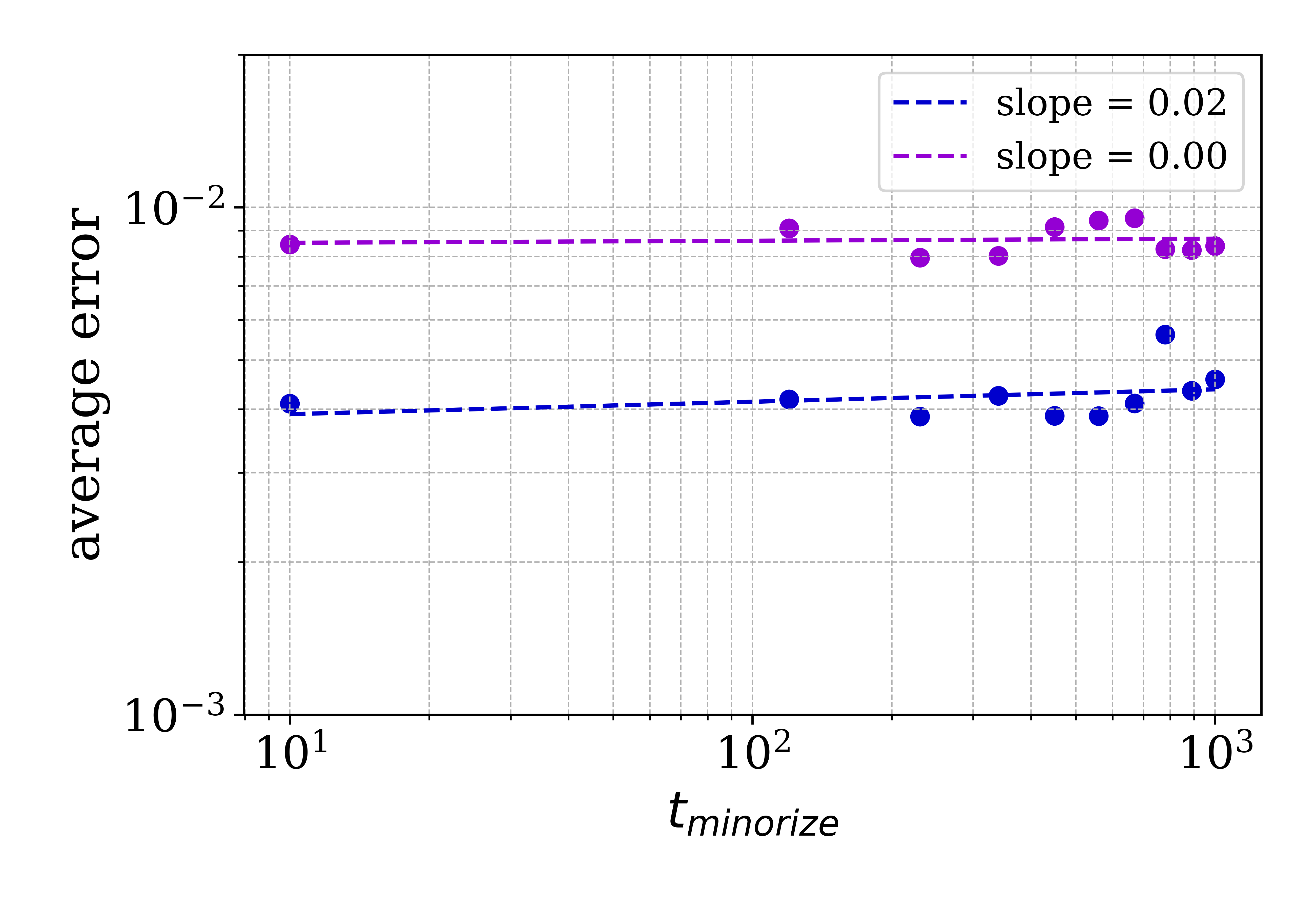}
        \caption{Verification of $\tminor$ dependence. A $0$ slope indicates the $\widetilde O(\tminor)$ dependence. }
        \label{fig:test_tmix}
    \end{subfigure}
    \caption{Numerical experiments using the hard MDP instance in \cite{wang2023optimal}. }
    \label{fig:num_exp}
\end{figure}
In this section, we conduct two numerical experiments to verify our algorithm's optimal sample complexity dependence on $\epsilon$ and $\tminor$. The family of reward functions and transition kernels used for both experiments belongs to the family of hard instances constructed in \cite{wang2023optimal}. By the same reduction argument, this family of AMDPs has sample complexity $\Omega(\tminor\epsilon^{-2})$. 

\par First, we verify the $\epsilon$ dependence of our algorithm. To achieve this, we study the error of estimating the true average reward $\bar\alpha = 0.5$ as the sample size increases. The experiment runs 300 replications of Algorithm \ref{alg:reduced_perturbed_mb_planning} under different sample sizes and constructs the red data points and regression line in Figure \ref{fig:test_eps}. We also implement the algorithm proposed in \citet{jin_sidford2021}, conduct the same experiment, and produce the data in blue. Our algorithm outperforms the prior work. Moreover, in log-log scale, the red regression line has a slope that is close to $-1/2$, indicating a $n^{-1/2}$ convergence rate. This verifies the optimal $\widetilde O(\epsilon^{-2})$ dependence of our Algorithm \ref{alg:reduced_perturbed_mb_planning}. On the other hand, the blue line has a slope near $-1/3$, indicating a suboptimal $\widetilde O(\epsilon^{-3})$ for the algorithm in \cite{jin_sidford2021}. This significant improvement is due to our optimal implementation and analysis of the DMDP algorithm in \cite{Li2020_generator_optimal}, reducing sample complexity dependence on $(1-\gamma)\inv$. 

\par Next, we verify our algorithm's sample complexity dependence on $\tminor$. Fix a $\epsilon > 0$ and recall that $1-\gamma = \Theta(\epsilon/\tminor) =\Theta({\tminor}^{-1})$. So, the optimal sample size for Algorithm \ref{alg:reduced_perturbed_mb_planning} is $n = \widetilde\Theta ((1-\gamma)^{-2}{\tminor}^{-1}) = \widetilde\Theta \crbk{\tminor}$  which is linear in $\tminor$. Thus, applying Algorithm \ref{alg:reduced_perturbed_mb_planning} to the hard instance with minorization time $\tminor$ using sample size $n = C\tminor$ for some large constant $C$, our sample complexity upper bound suggests that the estimation error of $\bar\alpha$ should be $\widetilde O(1)$. Consequently, plotting the average error against the number of samples used or, equivalently, $\tminor$ on a log-log scale while varying $\tminor$ should yield a line with a near 0 (or possibly a negative) slope, according to our theory. The experiments in Figure \ref{fig:test_tmix} use $C = 4500$ for the purple line and $C = 18000$ for the blue line. With $\tminor$ varying within the range $[10,1000]$, both regression lines exhibit a near 0 slope, indicating a $\widetilde O(\tminor)$ dependence.

\par Therefore, through the numerical experiments presented in Figure \ref{fig:num_exp}, we separately verify the optimality of our algorithm's sample complexity dependence on both $\epsilon$ and $\tminor$.
\section{Concluding Remarks}\label{section:conclusion}

We now discuss certain limitations intrinsic to our proposed methodology as well as potential avenues for future research. Firstly, the use of the DMDP approximation approach necessitates a priori knowledge of an upper bound on the uniform mixing time for the transition kernel $P$ in order to initiate the algorithm. In practical applications, such an upper bound can be challenging to obtain, or in certain instances, result in excessively pessimistic estimates. One could circumvent this by using an additional logarithmic factor of samples, thus allowing for the algorithm to operate with geometrically larger values of $\tmix$, and terminate after a suitable number of iterations. Nevertheless, this approach still hinges upon the knowledge of the mixing time to achieve optimal termination.

Secondly, we assume a strong form of MDP mixing known as uniform ergodicity. While theoretically optimal, this notion of mixing yields conservative upper bounds on sample complexity in situations wherein suboptimal policies induce Markov chains with especially large mixing times. It is our contention, supported by the findings presented in references such as \citet{wang2022amdp_reduction} and \citet{wang2023optimal}, that the sample complexity should be contingent solely upon the properties of the optimal policies. Moreover, the complexity measure parameter $H$ that is used in \citet{wang2022amdp_reduction,zhang2023sharper_amdp} has several advantages over $\tmix$ that lie beyond the ability to generalize to weakly communicating MDPs. In particular, for periodic chains and special situations where the optimal average reward remains state-independent despite the presence of multiple recurrent classes of the transition kernel induced by an optimal policy, the parameter $H$ is well-defined but $\tmix$ is not \citep{Puterman1994}. Consequently, we are now dedicating research effort to the development of an algorithm and analysis capable of achieving a sample complexity of $\widetilde O (|S||A|H \epsilon^{-2})$.

Lastly, as the assumption of uniform ergodicity extends beyond finite state space MDPs, we aspire to venture into the realm of general state-space MDPs. Our objective is to extrapolate the principles underpinning our methodology to obtain sample complexity results for general state space MDPs.
\subsubsection*{Acknowledgments}
The material in this paper is based upon work supported by the Air Force Office of Scientific Research under award number FA9550-20-1-0397. Additional support is gratefully acknowledged from NSF 1915967, 2118199, 2229012, 2312204.

\bibliographystyle{apalike}
\bibliography{bib_file/MDP,bib_file/RL_appliaction,bib_file/stochastic_systems,bib_file/simulation,bib_file/statistics_prob}

\begin{thebibliography}{}

\bibitem[Agarwal et~al., 2020]{agarwal2020}
Agarwal, A., Kakade, S., and Yang, L.~F. (2020).
\newblock Model-based reinforcement learning with a generative model is minimax optimal.
\newblock In Abernethy, J. and Agarwal, S., editors, {\em Proceedings of Thirty Third Conference on Learning Theory}, volume 125 of {\em Proceedings of Machine Learning Research}, pages 67--83. PMLR.

\bibitem[Azar et~al., 2013]{azar2013}
Azar, M.~G., Munos, R., and Kappen, H.~J. (2013).
\newblock Minimax {PAC} bounds on the sample complexity of reinforcement learning with a generative model.
\newblock {\em Mach. Learn.}, 91(3):325–349.

\bibitem[Bramson, 2008]{Bramson2008StabilityQ}
Bramson, M. (2008).
\newblock {\em Stability of Queueing Networks: {\'E}cole d'{\'E}t{\'e} de Probabilit{\'e}s de Saint-Flour XXXVI - 2006}.
\newblock Springer Berlin Heidelberg, Berlin, Heidelberg.

\bibitem[{Deng} et~al., 2017]{Deng2017rlFinanceSignal}
{Deng}, Y., {Bao}, F., {Kong}, Y., {Ren}, Z., and {Dai}, Q. (2017).
\newblock Deep direct reinforcement learning for financial signal representation and trading.
\newblock {\em IEEE Transactions on Neural Networks and Learning Systems}, 28(3):653--664.

\bibitem[Hordijk and Tijms, 1975]{Hordijk1975discount_approx_amdp}
Hordijk, A. and Tijms, H. (1975).
\newblock A modified form of the iterative method of dynamic programming.
\newblock {\em The Annals of Statistics}, 3(1):203--208.

\bibitem[Jin and Sidford, 2020]{jin2020avg_rwd_smd}
Jin, Y. and Sidford, A. (2020).
\newblock Efficiently solving {MDPs} with stochastic mirror descent.

\bibitem[Jin and Sidford, 2021]{jin_sidford2021}
Jin, Y. and Sidford, A. (2021).
\newblock Towards tight bounds on the sample complexity of average-reward {MDPs}.

\bibitem[Khamaru et~al., 2021]{khamaru2021}
Khamaru, K., Xia, E., Wainwright, M.~J., and Jordan, M.~I. (2021).
\newblock Instance-optimality in optimal value estimation: Adaptivity via variance-reduced q-learning.

\bibitem[Kober et~al., 2013]{Kober2013RLinRoboticsSurvey}
Kober, J., Bagnell, J.~A., and Peters, J. (2013).
\newblock Reinforcement learning in robotics: A survey.
\newblock {\em The International Journal of Robotics Research}, 32(11):1238--1274.

\bibitem[Li et~al., 2021]{li2021QL_minmax}
Li, G., Cai, C., Chen, Y., Gu, Y., Wei, Y., and Chi, Y. (2021).
\newblock Is q-learning minimax optimal? {A} tight sample complexity analysis.
\newblock {\em arXiv preprint arXiv:2102.06548}.

\bibitem[Li et~al., 2022]{li2022settling}
Li, G., Shi, L., Chen, Y., Chi, Y., and Wei, Y. (2022).
\newblock Settling the sample complexity of model-based offline reinforcement learning.

\bibitem[Li et~al., 2020]{Li2020_generator_optimal}
Li, G., Wei, Y., Chi, Y., Gu, Y., and Chen, Y. (2020).
\newblock Breaking the sample size barrier in model-based reinforcement learning with a generative model.
\newblock In Larochelle, H., Ranzato, M., Hadsell, R., Balcan, M., and Lin, H., editors, {\em Advances in Neural Information Processing Systems}, volume~33, pages 12861--12872. Curran Associates, Inc.

\bibitem[Meyn and Tweedie, 2009]{meyn_tweedie_glynn_2009}
Meyn, S. and Tweedie, R.~L. (2009).
\newblock {\em {M}arkov Chains and Stochastic Stability}.
\newblock Cambridge Mathematical Library. Cambridge University Press, 2 edition.

\bibitem[Puterman, 1994]{Puterman1994}
Puterman, M. (1994).
\newblock {\em Average Reward and Related Criteria}, chapter~8, pages 331--440.
\newblock John Wiley \& Sons, Ltd.

\bibitem[Sadeghi and Levine, 2016]{Sadeghi2016Cad2rl}
Sadeghi, F. and Levine, S. (2016).
\newblock Cad2rl: Real single-image flight without a single real image.
\newblock {\em arXiv preprint arXiv:1611.04201}.

\bibitem[Sidford et~al., 2018]{sidford2018near_opt}
Sidford, A., Wang, M., Wu, X., Yang, L., and Ye, Y. (2018).
\newblock Near-optimal time and sample complexities for solving {M}arkov decision processes with a generative model.
\newblock In Bengio, S., Wallach, H., Larochelle, H., Grauman, K., Cesa-Bianchi, N., and Garnett, R., editors, {\em Advances in Neural Information Processing Systems}, volume~31. Curran Associates, Inc.

\bibitem[Wainwright, 2019]{wainwright2019}
Wainwright, M.~J. (2019).
\newblock Variance-reduced $q$-learning is minimax optimal.

\bibitem[Wang et~al., 2022]{wang2022amdp_reduction}
Wang, J., Wang, M., and Yang, L.~F. (2022).
\newblock Near sample-optimal reduction-based policy learning for average reward {MDP}.

\bibitem[Wang, 2017]{wang2017primaldual}
Wang, M. (2017).
\newblock Primal-dual $\pi$ learning: Sample complexity and sublinear run time for ergodic {M}arkov decision problems.

\bibitem[Wang et~al., 2023]{wang2023optimal}
Wang, S., Blanchet, J., and Glynn, P. (2023).
\newblock Optimal sample complexity of reinforcement learning for uniformly ergodic discounted {M}arkov decision processes.

\bibitem[Watkins and Dayan, 1992]{Watkins1992}
Watkins, C. J. C.~H. and Dayan, P. (1992).
\newblock Q-learning.
\newblock {\em Machine Learning}, 8(3):279--292.

\bibitem[Zhang and Xie, 2023]{zhang2023sharper_amdp}
Zhang, Z. and Xie, Q. (2023).
\newblock Sharper model-free reinforcement learning for average-reward {M}arkov decision processes.

\end{thebibliography}

\appendix
\appendixpage

\section{Statistical Properties of the Estimators of Uniformly Ergodic DMDPs}\label{a_sec:ue_dmdp_analysis}
\par In this section, our objective is to establish concentration properties pertaining to the estimators of the value function and optimal policies. Before discussing these statistical properties, we introduce some notations and auxiliary quantities that facilitate our analysis. 
\par Under Assumption \ref{assump:unif_ergodic}, it is useful to consider the \textit{span semi-norm} \citep{Puterman1994}. For vector $v\in V = \R^{d}$, let $1$ be the vector with all entries equal to one and define
\begin{equation}\label{eqn:span_norm}
\begin{aligned}
\spnorm{v} &:= \inf_{c\in\R}\|v-c1\|_\infty \\
&= \max_{1\le i\leq d}v_i - \min_{1\leq i\leq d}v_i. 
\end{aligned}
\end{equation}
Note that the span semi-norm satisfies the triangle inequality $\spnorm{v+w}\leq \spnorm{v} + \spnorm{w}$. 
\par The analysis in this section will make use of the following standard deviation parameters: define 
\begin{equation}\label{eqn:def_sigma_pi}
\sigma(v)(s,a) := \sqrt{p_{s,a}[v^2] - p_{s,a}[v]^2}, \quad \text{and}\quad\sigma_{\pi}(v)(s) := \sigma(v)(s,\pi(s)). 
\end{equation}

\par Let $\pi_0$ be an optimal policy associated with MDPs $\cM(\gamma, P,R)$. Recall that $\hat\pi_0$ is optimal for $\cM(\gamma,\widehat P,R)$. Define for any $\pi\in\Pi$, $v_0^{\pi} := (I-\gamma P_{\pi})\inv R_{\pi}$ and $\hat v_0^{\pi} := (I-\gamma \widehat P_{\pi})\inv R_{\pi}$. Also, let $\hat q_0 = (I-\gamma \widehat P ^{\hat\pi_0})\inv R_{\hat\pi_0}$. 
\par Consider the following event
\begin{equation}\label{eqn:Omega_eta}
\Omega_\eta :=\set{\inf_{s\in S}\crbk{\hat v_0^{\hat\pi_0}(s) - \max_{b\neq \hat\pi_0(s)}\hat q_0(s,b)} \geq \eta}.
\end{equation}
This is a set of good events on which the optimality gap of MDP $\cM(\gamma,\widehat P,R)$ is larger than $\eta$. 

\par Recall the definition of the reward perturbation $Z$ in \eqref{eqn:Z_zeta_def} and perturbed reward $R$. For any $\pi\in\Pi$, let $R_{\pi}(s)= R(s,\pi(s))$ and $Z_{\pi}(s)= Z(s,\pi(s))$ for all $s\in S$. To achieve the desired sample efficiency in terms of the minimum sample size, \cite{Li2020_generator_optimal} recursively defines the auxiliary values: 
\begin{equation}\label{eqn:aux_val_seq}
\begin{aligned}
h_{0}^\pi &= R_\pi;\quad 
&v^{\pi}_0&= (I-\gamma P_\pi)\inv h_{0}^\pi;\quad 
&\hat v^{\pi}_0 &= (I-\gamma \widehat P_\pi)\inv h_{0}^\pi\\
h_l^\pi &= \sigma_\pi(v_{l-1}^\pi); \quad   
&v_{l}^\pi &= (I-\gamma P_\pi)\inv h_{l}^\pi;  \quad
&\hat v_{l}^\pi &= (I-\gamma \widehat P_\pi)\inv h_{l}^\pi
\end{aligned}
\end{equation}
for all $l\geq 1$. Using these sequences and the techniques in \cite{wang2023optimal}, we are able to show the following concentration bound: 

\begin{proposition}\label{prop:hatvhatpi-vhatpi_bd} Assume Assumption \ref{assump:unif_ergodic} and for some $\eta\leq 1$, $P(\Omega_\eta) \geq 1-\delta/3$. For $n\geq 64\beta_\delta(\eta)(1-\gamma)\inv$, then w.p. at least $1-\delta$ under $P$
\[
\norminf{\hat v_0^{\hat\pi_0} - v_0^{\hat\pi_0}}\leq 243\sqrt{\frac{\beta_\delta(\eta)\tminor }{(1-\gamma)^2n}},\quad\text{and}\quad v_0^{\pi_0} - v_0^{\hat\pi_0}\leq 486\sqrt{\frac{\beta_\delta(\eta)\tminor }{(1-\gamma)^2n}}
\]
where 
\[
\beta_\delta(\eta) = 2\log \crbk{\frac{24|S||A|\log_2((1-\gamma)\inv)}{(1-\gamma)^2\eta\delta}}.
\]
\end{proposition}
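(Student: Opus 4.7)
The plan is to follow the template of the analysis in \cite{Li2020_generator_optimal} for perturbed model--based planning, but to substitute the worst--case control of resolvent operators acting on standard--deviation vectors with the mixing--time--sensitive bounds developed in \cite{wang2023optimal}. Two ingredients drive the argument: (i) a Neumann series expansion that expresses $\hat v_0^{\hat\pi_0} - v_0^{\hat\pi_0}$ in terms of $(\widehat P - P)$ applied to value functions, and (ii) a Bernstein concentration for $(\widehat P - P)[v]$ coupled with a leave--one--out decoupling of $\hat\pi_0$ from $\widehat P$ on the good event $\Omega_\eta$.

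First I would write
\[
\hat v_0^{\hat\pi_0} - v_0^{\hat\pi_0} = \gamma (I-\gamma \widehat P_{\hat\pi_0})^{-1}(\widehat P_{\hat\pi_0} - P_{\hat\pi_0}) v_0^{\hat\pi_0},
\]
and apply Bernstein's inequality row--by--row to $(\widehat P - P)v$, producing the split
\[
\abs{(\widehat P - P)v}(s,a) \lesssim \sqrt{\tfrac{\iota}{n}}\,\sigma(v)(s,a) + \tfrac{\iota}{n}\,\norminf{v},
\]
with $\iota$ a logarithmic factor. The obstacle is that $v_0^{\hat\pi_0}$ depends on $\widehat P$ through $\hat\pi_0$, so one cannot apply Bernstein directly to the sampled rows. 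To fix this, on $\Omega_\eta$ I would run a leave--one--out argument: for each $(s,a)$ construct an auxiliary kernel $\widehat P^{(s,a)}$ that replaces row $(s,a)$ by any fixed absorbing transition, and exploit the $\eta$--margin to show that the greedy policy of $\widehat P^{(s,a)}$ still equals $\hat\pi_0$ when $n$ is above the stated threshold. This reduces the log factor from $\log\abs{\Pi} = |S|\log|A|$ to exactly the form $\beta_\delta(\eta) \asymp \log(|S||A|/((1-\gamma)^2 \eta \delta))$, because the margin $\eta$ has to be overcome by the concentration error uniformly in $(s,a)$.

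Next, to upgrade $\iota/(1-\gamma)^3$ to the sharper $\tminor/(1-\gamma)^2$ scaling, I would iterate the auxiliary--value recursion in \eqref{eqn:aux_val_seq}. After applying Bernstein once, the residual involves $(I-\gamma P_\pi)^{-1}\sigma_\pi(v_0^\pi)$. Instead of the naive $O((1-\gamma)^{-2}\norminf{v})$ bound, I would plug in the key mixing--time estimate from \cite{wang2023optimal},
\[
\norminf{(I-\gamma P_\pi)^{-1} \sigma_\pi(v)} \lesssim \sqrt{\tfrac{\tminor}{(1-\gamma)^3}}\,\norminf{v},
\]
and cascade through $h_l^\pi, v_l^\pi, \hat v_l^\pi$ for $L = O(\log_2((1-\gamma)^{-1}))$ levels (this is the origin of the $\log_2((1-\gamma)^{-1})$ in $\beta_\delta$). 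Each level halves the effective variance exponent until one reaches an ordinary $\ell_\infty$ bound that is absorbed by the minimum sample--size condition $n \geq 64\beta_\delta(\eta)(1-\gamma)^{-1}$. Summing the telescoping errors yields $\norminf{\hat v_0^{\hat\pi_0} - v_0^{\hat\pi_0}} \leq 243\sqrt{\beta_\delta(\eta)\tminor/((1-\gamma)^2 n)}$.

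Finally, for the policy suboptimality bound I would decompose
\[
v_0^{\pi_0} - v_0^{\hat\pi_0} = \underbrace{(v_0^{\pi_0} - \hat v_0^{\pi_0})}_{\text{apply Step 1--3 to } \pi_0} + \underbrace{(\hat v_0^{\pi_0} - \hat v_0^{\hat\pi_0})}_{\leq 0 \text{ by optimality of }\hat\pi_0\text{ for }\widehat P} + \underbrace{(\hat v_0^{\hat\pi_0} - v_0^{\hat\pi_0})}_{\text{Step 3}},
\]
and bound the two nonzero terms by the same $243\sqrt{\cdot}$ quantity, giving the factor $486$. The main obstacle will be the leave--one--out step: one must show that the $\eta$--margin is preserved under a single--row perturbation of the empirical kernel, which requires a self--bounding inequality ensuring that the empirical value function estimates stay within $\eta/2$ of their true empirical counterparts uniformly over $(s,a)$. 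Controlling the resulting log factor to be exactly $\beta_\delta(\eta)$, rather than $\log|\Pi|$, is the crux that makes the downstream $\eta = \eta_\delta^*$ choice in Proposition \ref{prop:eta_choice} feasible.
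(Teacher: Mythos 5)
Your proposal follows essentially the same route as the paper's proof: the resolvent identity $\hat v_0^{\hat\pi_0}-v_0^{\hat\pi_0}=\gamma(I-\gamma\widehat P_{\hat\pi_0})\inv(\widehat P_{\hat\pi_0}-P_{\hat\pi_0})v_0^{\hat\pi_0}$, Bernstein concentration combined with the absorbing-row leave-one-out decoupling over a $\rho$-net on $\Omega_\eta$ (which yields the $(1-\gamma)^{-2}\eta^{-1}$ inside the log of $\beta_\delta$), the recursion through the auxiliary sequence $\set{v_l^{\pi}}$ over $l^*=O(\log_2((1-\gamma)\inv))$ levels, the substitution of the mixing-time bounds $\spnorm{v_0^{\pi}}\leq 3\tminor$ and $\norminf{v_1^{\pi}}\leq 80\sqrt{\tminor}/(1-\gamma)$ from \cite{wang2023optimal}, and the three-term decomposition with $\hat v_0^{\pi_0}-\hat v_0^{\hat\pi_0}\leq 0$ for the second claim. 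The one imprecision worth noting is that the lower-order term in your Bernstein bound should carry $\spnorm{v}$ rather than $\norminf{v}$ (available for free since $(\hat p_z-p_z)[1]=0$); this is exactly where $\spnorm{v_0^{\pi}}\leq 3\tminor$ enters, and with $\norminf{v}\asymp(1-\gamma)\inv$ instead the residual would not be absorbed under the minimum sample size $n\geq 64\beta_\delta(\eta)(1-\gamma)\inv$.
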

The proof of Proposition \ref{prop:hatvhatpi-vhatpi_bd} is provided in section \ref{a_sec:proof:prop:hatvhatpi-vhatpi_bd}. We see that the conclusion of \ref{prop:hatvhatpi-vhatpi_bd} will resemble the statement of Theorem \ref{thm:discounted_err_bd_and_SC} if $P(\Omega_{\eta})\geq 1-\delta/3$ as well as $v_0^{\pi_0} \approx v^*$ and $v_0^{\hat \pi_0} \approx v^{\hat \pi_0}$ are sufficiently close. The following proposition in \cite{Li2020_generator_optimal} proves the former requirement. 
\begin{proposition}[Lemma 6, \cite{Li2020_generator_optimal}]\label{prop:eta_choice}
Let 
\[
\eta_\delta^* = \frac{\zeta \delta (1-\gamma)}{9|S||A|^2},
\]
then $P(\Omega_{\eta_\delta^*})\geq 1-\delta/3$. 
\end{proposition}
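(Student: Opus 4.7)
The plan is a smoothed-analysis argument: the i.i.d. uniform perturbations $Z(s,a)\sim\unif(0,\zeta)$ applied to the reward break ties in the empirical DMDP $\cM(\gamma,\widehat P,R)$ and create a strictly positive optimality margin with high probability over $Z$. Since $\widehat P$ is independent of $Z$, I would first condition on $\widehat P$ throughout and treat $Z$ as the only source of randomness in the analysis; the sample draws then play no further role.

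The core probabilistic input is an anti-concentration inequality for a single (state, alternative action) pair. Fix any policy $\pi\in\Pi$, state $s$, and action $b\neq\pi(s)$. Because $\hat v_0^\pi=(I-\gamma\widehat P_\pi)\inv(r_\pi+Z_\pi)$ involves only the perturbations $\{Z(s',\pi(s'))\}_{s'\in S}$ along the policy $\pi$ and $b\neq\pi(s)$, the value $\hat v_0^\pi$ is independent of $Z(s,b)$, while $\hat q_0^\pi(s,b):=R(s,b)+\gamma\hat p_{s,b}[\hat v_0^\pi]$ is affine in $Z(s,b)$ with slope $1$. Consequently the gap $\hat v_0^\pi(s)-\hat q_0^\pi(s,b)$ is affine in $Z(s,b)\sim\unif(0,\zeta)$ with slope $-1$ and constant term independent of $Z(s,b)$, and conditioning on the rest of the randomness yields
\[
P\!\big(0\le \hat v_0^\pi(s)-\hat q_0^\pi(s,b)<\eta\big)\le \eta/\zeta.
\]

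The main obstacle is transferring this per-policy statement to the random optimal policy $\hat\pi_0$, whose identity is itself a function of $Z$: a naive union bound over all $|A|^{|S|}$ deterministic policies is exponentially too loose compared to the polynomial target $\eta_\delta^*\propto 1/(|S||A|^2)$. My plan is to avoid that by union bounding only over the $|S|\cd|A|(|A|-1)$ triples $(s,a,b)$ with $a\neq b$ and bounding
\[
P\!\big(\hat\pi_0(s)=a,\ 0\le \hat q_0(s,a)-\hat q_0(s,b)<\eta\big)
\]
by a one-dimensional slicing argument along $Z(s,b)$: fix all other coordinates of $Z$, and observe that on any $z$-interval in $[0,\zeta]$ where $\hat\pi_0(s)\equiv a$, the value $\hat v_0^{\hat\pi_0}$ is independent of $z=Z(s,b)$ (since $a\neq b$), so the gap is linear in $z$ with slope $-1$ and the small-gap window has one-dimensional Lebesgue measure at most $\eta$. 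Integrating against $\unif(0,\zeta)$ then yields probability $\le \eta/\zeta$ per triple, modulo quantitative control of how many such disjoint $z$-windows the Bellman feedback in $(I-\gamma\widehat P_{\hat\pi_0})\inv$ can create. I expect the delicate step to be exactly this slice-wise control, which absorbs the extra $(1-\gamma)$ factor distinguishing $\eta_\delta^*=\zeta\delta(1-\gamma)/(9|S||A|^2)$ from the naive bound $\zeta\delta/(3|S||A|^2)$. Once the per-triple bound is in place, union bounding over $(s,a,b)$ and choosing $\eta=\eta_\delta^*$ delivers $P(\Omega_\eta^c)\le \delta/3$.
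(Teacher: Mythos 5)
The paper does not actually prove this proposition: it is imported verbatim as Lemma 6 of \cite{Li2020_generator_optimal}, so there is no in-paper argument to compare your attempt against. Your reconstruction follows the same general route as that cited proof: condition on a single perturbation coordinate $Z(s,b)$, exploit that the optimality gap at $(s,b)$ is affine in $Z(s,b)$ with slope $-1$, and union bound over the polynomially many triples $(s,a,b)$ rather than over the exponentially many policies. The per-policy anti-concentration step is correct, and your arithmetic closes with room to spare.

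However, the step you yourself flag --- ``quantitative control of how many such disjoint $z$-windows the Bellman feedback can create'' --- is not a technicality to be absorbed into constants; it is the crux, and as written the argument is incomplete. If the set $\set{z : \hat\pi_0(s) = a}$ could consist of $K$ disjoint intervals, the bad event for the triple $(s,a,b)$ would have conditional probability up to $K\eta/\zeta$, and nothing in your sketch rules out $K$ being as large as the number of policy switches along the $z$-axis, which a priori is exponential. The missing idea is a convexity observation: with all other perturbations frozen, $z\mapsto \hat v_0^{\hat\pi_0}(s') = \max_{\pi\in\Pi}\sqbk{(I-\gamma\widehat P_\pi)\inv R_\pi}(s')$ is a maximum of finitely many affine functions of $z$, hence convex and piecewise linear with nondecreasing slope; the slope is $0$ exactly when the optimizing policy does not take action $b$ at state $s$, and is at least $1$ (for the coordinate $s'=s$) otherwise. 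Convexity therefore forces $\set{z : \hat\pi_0(s)\neq b}$ to be a single initial interval, on which the entire vector $\hat v_0^{\hat\pi_0}$ is constant and hence the greedy non-$b$ action at $s$ is constant; so there is exactly one window, of measure at most $\eta$. Finally, your expectation that the window count must ``absorb the extra $(1-\gamma)$ factor'' is off: since $\Omega_{\eta'}\supseteq\Omega_\eta$ whenever $\eta'\leq\eta$, it suffices to prove the bound at the larger threshold $\zeta\delta/(3|S||A|^2)$, and the factor $(1-\gamma)/3$ distinguishing $\eta_\delta^*$ from that threshold is harmless slack inherited from \cite{Li2020_generator_optimal}, not something your argument needs to generate.
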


\subsection{Proof of Theorem \ref{thm:discounted_err_bd_and_SC}}
Now, we are ready to present of proof of Theorem \ref{thm:discounted_err_bd_and_SC} given Propositions \ref{prop:hatvhatpi-vhatpi_bd} and \ref{prop:eta_choice}. 
\begin{proof}
\par Observe that for any $\pi\in\Pi$, by the construction in \eqref{eqn:aux_val_seq},
\[
v_0^{\pi} = (I-\gamma P_{\pi})\inv R_{\pi} = v^{\pi} + (I-\gamma P_{\pi})\inv Z_{\pi}.
\]
Therefore,
\[
\norminf{v_0^{\pi} - v^{\pi}}\leq \zeta\norm{(I-\gamma P_{\pi})\inv}_{\infty,\infty} \leq \frac{\zeta}{1-\gamma}. 
\]
Consider
\begin{align*}
v^{*} - v^{\hat\pi_0} &= (v^{*} - v_0^{\pi^*}) + (v_0^{\pi^*} - v_0^{\pi_0}) + (v_0^{\pi_0} - v_0^{\hat \pi_0} ) +   (v_0^{\hat \pi_0} - v^{\hat\pi_0} )\\
&\leq \frac{2\zeta}{1-\gamma} + v_0^{\pi_0} - v_0^{\hat \pi_0} 
\end{align*}
where the optimality of $\pi_0$ implies that $v_0^{\pi^*} - v_0^{\pi_0}\geq 0$. By Proposition \ref{prop:hatvhatpi-vhatpi_bd} and \ref{prop:eta_choice}, w.p. at least $1-\delta$
\[
v_0^{\pi_0} - v_0^{\hat \pi_0} \leq 486\sqrt{\frac{\beta_\delta(\eta_\delta^*)\tminor }{(1-\gamma)^2n}}. 
\]
provided that $n\geq 64\beta_\delta(\eta_\delta^*)(1-\gamma)\inv$. 
\par To arrive at the sample complexity bound, we first note that $\beta_\delta(\eta_\delta^*)$ has log dependence on $\zeta$. Choose $\zeta = (1-\gamma)\epsilon/4$ and for $c = 4\cd 486^2$, 
\[
n = \frac{c\tminor\beta_\delta(\eta_\delta^*) }{(1-\gamma)^{2}\epsilon^{2}}= \widetilde O\crbk{\frac{\tminor }{(1-\gamma)^{2}\epsilon^{2}}}.\] 
Then for $\epsilon\leq \sqrt{\tminor(1-\gamma)\inv}$, $n\geq 64\beta_\delta(\eta_\delta^*) (1-\gamma)\inv$ and w.p. at least $1-\delta$
\[
v^{*} - v^{\hat\pi_0}\leq \frac{\epsilon}{2} + \frac{486\epsilon }{\sqrt{c}} = \epsilon. 
\]
\end{proof}

\section{Reduction Bound and Optimal Sample Complexity for AMDP}\label{a_sec:opt_SC_AMDP}
\par This this section, we prove Theorem \ref{thm:avg_SC} given the DMDP optimal sample complexity result in Theorem \ref{thm:discounted_err_bd_and_SC}. To achieve this, we need the following lemma that allows us to compare the long run average value and the discounted value of the MDP. From Lemma 3 \cite{jin_sidford2021} and Theorem 1 of \cite{wang2023optimal} (c.f. equation \eqref{eqn:tmix_tminor_const_factor}), we have that
\begin{lemma}\label{lemma:disc_approx_avg_err_bd} Under Assumption \ref{assump:unif_ergodic}, for all $\pi\in\Pi$, 
\[
\norminf{(1-\gamma)v^\pi-\alpha^\pi}\leq  9 (1-\gamma)\tminor. 
\]
\end{lemma}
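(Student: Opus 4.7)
The plan is to express $(1-\gamma)v^\pi - \alpha^\pi e$ purely in terms of the bias $u^\pi$ via Poisson's equation, and then bound $\spnorm{u^\pi}$ using the geometric ergodicity implied by Assumption \ref{assump:unif_ergodic}.

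First I would invoke Poisson's equation \eqref{eqn:pois_eqn} to write $r_\pi = \alpha^\pi e + (I-P_\pi)u^\pi$. Substituting into $v^\pi = (I-\gamma P_\pi)\inv r_\pi$ and using $P_\pi e = e$ (hence $(I-\gamma P_\pi)\inv e = (1-\gamma)\inv e$) yields
\[
(1-\gamma)v^\pi - \alpha^\pi e \;=\; (1-\gamma)(I-\gamma P_\pi)\inv (I-P_\pi) u^\pi \;=\; (1-\gamma)u^\pi - (1-\gamma)^2 (I-\gamma P_\pi)\inv P_\pi u^\pi,
\]
where the second equality uses $I-P_\pi = (I-\gamma P_\pi) - (1-\gamma)P_\pi$. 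The crucial observation is that this expression is invariant under the shift $u^\pi \mapsto u^\pi + c e$ (which also solves the Poisson equation), since the shift contributes $(1-\gamma)c e - (1-\gamma)^2 \cdot (1-\gamma)\inv c e = 0$. I may therefore replace $u^\pi$ by its centered version satisfying $\norminf{u^\pi} = \spnorm{u^\pi}/2$. Applying $\norm{(I-\gamma P_\pi)\inv}_{\infty\to\infty} = (1-\gamma)\inv$ and the triangle inequality then gives $\norminf{(1-\gamma)v^\pi - \alpha^\pi} \leq 2(1-\gamma)\norminf{u^\pi} = (1-\gamma)\spnorm{u^\pi}$.

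Second, to bound $\spnorm{u^\pi}$ I would use the series representation $u^\pi(s) = \sum_{t\geq 0}(P_\pi^t r_\pi(s) - \alpha^\pi)$ from \eqref{eqn:transient_val_func}. Under Assumption \ref{assump:unif_ergodic} with Doeblin parameters $(m,q,\psi)$, the standard splitting/coupling argument gives $\norm{P_\pi^t(s,\cdot) - \eta_\pi(\cdot)}_{\mathrm{TV}} \leq (1-q)^{\lfloor t/m\rfloor}$. Centering $r_\pi$ (legitimate since $\sum_{s'}(P_\pi^t(s,s')-\eta_\pi(s'))=0$) and using $r_\pi \in [0,1]$ bounds $\norminf{u^\pi}$ by the geometric tail sum, which is $O(m/q)$; optimizing over Doeblin decompositions yields $\spnorm{u^\pi} \leq C\tminor$. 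As the hint in the text suggests, a cleaner alternative is to invoke Lemma 3 of \cite{jin_sidford2021} directly for the bound $\norminf{(1-\gamma)v^\pi - \alpha^\pi} \leq c(1-\gamma)\tmix$ with an explicit constant $c$, and then convert using the upper half $\tmix \leq \log(16)\tminor$ of \eqref{eqn:tmix_tminor_const_factor}.

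The main obstacle is purely numerical: arranging the constants to land at $9$. The qualitative bound $\norminf{(1-\gamma)v^\pi - \alpha^\pi} = O((1-\gamma)\tminor)$ is immediate from the above identity and geometric ergodicity, but pinning down the explicit constant requires careful TV-norm bookkeeping through the coupling argument (or, via the cited route, carrying the exact constant from Lemma 3 of \cite{jin_sidford2021} and multiplying by $\log(16)$). The shift-invariance step is the only slightly subtle structural ingredient; without it, one would be forced to control $\norminf{u^\pi}$ against an uncentered representative and lose a factor of two.
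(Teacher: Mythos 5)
Your proposal is correct, but it proves the lemma from scratch whereas the paper does not: the paper's entire ``proof'' is a citation, combining Lemma 3 of \cite{jin_sidford2021} (which bounds $\norminf{(1-\gamma)v^\pi-\alpha^\pi}$ by a constant times $(1-\gamma)\tmix$) with the conversion $\tmix\leq \log(16)\tminor$ from \eqref{eqn:tmix_tminor_const_factor} to land at the constant $9$. That is exactly the ``cleaner alternative'' you flag at the end, so you have correctly identified the intended route. Your primary argument is a genuinely different, self-contained derivation, and it checks out: the identity $(1-\gamma)v^\pi-\alpha^\pi e=(1-\gamma)u^\pi-(1-\gamma)^2(I-\gamma P_\pi)\inv P_\pi u^\pi$ is algebraically correct, the shift-invariance observation is the right way to pass to the centered bias, and the resulting bound $\norminf{(1-\gamma)v^\pi-\alpha^\pi}\leq (1-\gamma)\spnorm{u^\pi}$ is tight. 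The Doeblin/coupling step then gives $\norminf{u^\pi_{\mathrm{centered}}}\leq \sum_{t\geq 0}(1-q)^{\floor{t/m}}\leq m/q$, hence $\spnorm{u^\pi}\leq 2\tminor$ after optimizing over minorization pairs, so your route actually yields the sharper constant $2$ in place of $9$ without any delicate bookkeeping. What the paper's approach buys is brevity and reuse of an already-published constant; what yours buys is a proof that never leaves the $\tminor$ parametrization (avoiding the lossy $\tmix\leftrightarrow\tminor$ round trip) and makes the role of the bias span explicit, which connects more directly to the $H=\max_{\bar\pi}\spnorm{u^{\bar\pi}}$ discussion in the introduction. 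No gaps.
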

\subsection{Proof of Theorem \ref{thm:avg_SC}}
Given Lemma \ref{lemma:disc_approx_avg_err_bd} and Theorem \ref{thm:discounted_err_bd_and_SC}, we present the proof of Theorem \ref{thm:avg_SC}.
\begin{proof}[Proof of Theorem \ref{thm:avg_SC}]
\par First, note that $(1-\gamma)\inv\geq \tminor\geq 1$. By Theorem \ref{thm:discounted_err_bd_and_SC} and the choice of $\zeta$ and $n$, the policy $\hat\pi_0$ satisfies
\[
v^{*} - v^{\hat\pi_0}\leq \tminor
\]
w.p. at least $1-\delta$. 
\par Let $\bar\pi$ denote an optimal policy of the average reward problem. Then, w.p. at least $1-\delta$,
\begin{align*}
\bar\alpha - \alpha^{\hat\pi_0} &=[ \bar\alpha - (1-\gamma)v^{\bar\pi} ]+ (1-\gamma)[v^{\bar\pi} - v^*] + (1-\gamma)[v^* - v^{\hat\pi_0}] + [(1-\gamma)v^{\hat\pi_0}  - \alpha^{\hat\pi_0}]\\
&\stackrel{(i)}{\leq} [ \bar\alpha - (1-\gamma)v^{\bar\pi} ] +(1-\gamma)[v^* - v^{\hat\pi_0}] + [(1-\gamma)v^{\hat\pi_0}  - \alpha^{\hat\pi_0}]\\
&\stackrel{(ii)}{\leq} 9(1-\gamma)\tminor + (1-\gamma)\tminor + 9(1-\gamma)\tminor\\
&\stackrel{(iii)}{\leq} \epsilon
\end{align*}
where $(i)$ uses the optimality of $\pi^*$, $(ii)$ follows from Lemma \ref{lemma:disc_approx_avg_err_bd} and the $\tminor$-optimality of $\hat\pi_0$, $(iii)$ is due to the choice $1-\gamma = \frac{\epsilon}{19\tminor}$. 
The total number of samples used is
\[
|S||A|n = \widetilde O\crbk{\frac{|S||A|}{(1-\gamma)^2\tminor}} = \widetilde O\crbk{\frac{|S||A|\tminor}{\epsilon^2}}. 
\]
This implies the statement of Theorem \ref{thm:avg_SC}. 
\end{proof}

\section{Proofs of Key Propositions}
\subsection{Decoupling the Dependence of \texorpdfstring{$\widehat P$}{widehat P} and \texorpdfstring{$\hat\pi_0$}{hat pi0}}
In this section, we introduce the method proposed by \cite{agarwal2020} to decouple the statistical dependence of $\hat p_{s',a'}$ and $\hat\pi_0$ at a particular state action pair $(s',a')\in S\times A$. To simplify notation, we use $z= (s',a'
)\in S\times A$ to denote a pair of state and action. Define the kernel $\widehat K^{(z)}:= \set{\hat \kappa^{(z)}_{s,a}\in\cP(S):(s,a)\in S\times A}$ s.t. $\hat\kappa^{(z)}_{s',a'} = \delta_{s'}$ and $\hat\kappa^{(z)}_{s,a} = \hat
p_{s,a}$ for all $(s,a)\neq z$. Therefore, under $\hat K^{(z)}$ and at state action pair $z = (s',a')$, the MDP will transition to $s'$ w.p.1, while other transitions are done according to $\widehat P$.  
\par Now, for fixed $\rho\in\R$, define a modified reward 
\[
h^{(z,\rho)}(s,a) = \rho\1\set{z = (s,a)} + R\1\set{z \neq (s,a)},\quad  (s,a)\in S\times A.
\] Let $\hat g^{(z,\rho)}$ be the unique solution of the Bellman equation 
\[
g(s)=\max_{a\in A}\crbk{h^{(z,\rho)}(s,a)+ \gamma \hat\kappa^{(z)}_{s,a}[g]}.
\]
Let $\hat \chi^{(z,\rho)}$ be any optimal policy associated with $\hat g^{(z,\rho)}$. Now notice that since $\hat \kappa^{(z)}_{s',a'} = \delta_{s'}$ is non-random. Thus, $\hat g^{(z,\rho)}$ and $\hat \chi^{(z,\rho)}$ are independent of $\hat p_z$. By the definition of the auxiliary value functions in expression \eqref{eqn:aux_val_seq} above, $v_{l}^{(z,\rho)} := v_{l}^{\hat\chi^{(z,\rho)}}$ is also independent of $\hat p_z$. Therefore, if we define $\cG_z := \sigma(\hat p_{s,a}:(s,a)\neq z, R)$, then $v_{l}^{(z,\rho)}$ is measureable w.r.t. $\cG_z$. 
\par Therefore, we can decouple the dependence of $\widehat P_{s,a}$ and $\hat\pi_0$ by replacing $\hat\pi_0$ with $\hat \chi(z,\rho)$ for some $\rho$ so that $\hat\pi_0 = \hat \chi(z,\rho)$ with high probability. To achieve this, we first prescribe a finite set (a $\xi$-net of the interval $[-(1-\gamma)\inv,(1-\gamma)\inv]$ that will be denoted by $\cN_\xi$) of possible values for $\rho$, and prove that for sufficiently small $\xi$, such a $\rho$ can be picked from this set. Note that the motivation for constructing $\cN_\zeta$ and seeking a $\rho$ within it stems from the finite nature of this set, which enables us to apply the union bound technique.

\par Concretely, define an $\xi$-net on $[-(1-\gamma)\inv,(1-\gamma)\inv]$ by
\[
\cN_\xi := \set{-k_\xi\xi,-(k_\xi-1)\xi,\ds,0,\ds ,(k_\xi-1)\xi,k_\xi\xi }
\]
where $k_\xi = \floor{(1-\gamma)\inv \xi\inv}$. Note that, $|\cN_\xi|\leq 2(1-\gamma)\inv \xi\inv$. Then the following lemma in \cite{Li2020_generator_optimal} indicates that it is sufficient to set $\xi = (1-\gamma)\eta/4$ where we recall that $\eta$ is the optimality gap parameter of the event $\Omega_\eta$ in \eqref{eqn:Omega_eta}. 
\begin{lemma}[\cite{Li2020_generator_optimal}, Lemma 4]\label{lemma:replace_pi_by_chi}
For each $z \in S\times A$, there exists $\rho_z \in\cN_{(1-\gamma)\eta/4}$ s.t. $\hat \chi(z,\rho_z)(\omega) = \hat\pi_0(\omega)$ for all $\omega\in \Omega_\eta$. 
\end{lemma}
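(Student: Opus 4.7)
Fix $z=(s',a')\in S\times A$. The key structural feature is that as $\rho$ varies, only the instantaneous reward at the single pair $z$ changes; the transition $\hat\kappa^{(z)}_{s',a'} = \delta_{s'}$ is deterministic and $\rho$-independent. The plan is to pinpoint a target $\rho^{*} := (1-\gamma)\hat v_0^{\hat\pi_0}(s')$ at which $\hat\pi_0$ is optimal for the modified DMDP $\cM(\gamma,\widehat K^{(z)},h^{(z,\rho^{*})})$, and then show that on $\Omega_\eta$ the policy $\hat\pi_0$ remains the unique greedy policy for every $\rho_z \in \cN_\xi$ within distance $\xi := (1-\gamma)\eta/4$ of $\rho^{*}$. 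Such a $\rho_z$ exists because $\rho^{*}\in [0, 1+\zeta] \subset [-(1-\gamma)\inv,(1-\gamma)\inv]$ (using $\gamma\geq 1/2$ and $\zeta\leq 1$) and $\cN_\xi$ has spacing $\xi$. The analysis splits on whether $\hat\pi_0(s')=a'$.

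\textbf{Case 1: $\hat\pi_0(s')\neq a'$.} Since $\hat\pi_0$ never invokes action $a'$ at $s'$, the value of $\hat\pi_0$ in the modified DMDP equals $\hat v_0^{\hat\pi_0}$ independently of $\rho$. Plugging $\hat v_0^{\hat\pi_0}$ into the Bellman equation of $\cM(\gamma,\widehat K^{(z)},h^{(z,\rho)})$ is immediate at $s\neq s'$ by the optimality of $\hat\pi_0$ for $\cM(\gamma,\widehat P,R)$, and at $s=s'$ it reduces to $\rho + \gamma\hat v_0^{\hat\pi_0}(s') \leq \max_{a\neq a'}\hat q_0(s',a) = \hat v_0^{\hat\pi_0}(s')$, equivalent to $\rho\leq \rho^{*}$. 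Any $\rho_z\in\cN_\xi$ with $\rho_z < \rho^{*}$ (e.g.\ the largest such point) then yields a strict gap against $a'$, while the $\Omega_\eta$ gap handles the remaining actions; uniqueness of the Bellman fixed point gives $\hat\chi^{(z,\rho_z)}=\hat\pi_0$ on $\Omega_\eta$.

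\textbf{Case 2: $\hat\pi_0(s')=a'$.} Here the value of $\hat\pi_0$ in the modified DMDP does depend on $\rho$. A direct calculation yields $\hat v_0^{\hat\pi_0,(z,\rho)} = \hat v_0^{\hat\pi_0} + \tfrac{\rho-\rho^{*}}{1-\gamma}\tilde w$, where $\tilde w\in[0,1]^{S}$ is determined by $\tilde w(s')=1$ and $\tilde w(s) = \gamma\hat p_{s,\hat\pi_0(s)}[\tilde w]$ for $s\neq s'$; this is the $\gamma$-discounted hitting potential of $s'$ under $\widehat P_{\hat\pi_0}$. Substituting this ansatz into the Bellman equation of the modified DMDP and comparing Q-values, the gap at any state $s$ between $\hat\pi_0(s)$ and $a\neq\hat\pi_0(s)$ decomposes as the unperturbed gap $\hat q_0(s,\hat\pi_0(s))-\hat q_0(s,a)\geq \eta$ plus a correction controlled by $|\rho-\rho^{*}|/(1-\gamma)$ via $\norminf{\tilde w}\leq 1$. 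For $\rho_z\in \cN_\xi$ with $|\rho_z-\rho^{*}|\leq \xi$ this correction is at most $\eta/4$, so the residual gap is $\geq 3\eta/4>0$, $\hat\pi_0$ is the unique greedy policy at every state, and uniqueness of the Bellman fixed point closes the argument.

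\textbf{Main obstacle.} The principal technical step is Case 2: one must track the effect of perturbing $\rho$ away from $\rho^{*}$ on the \emph{optimal} value $\hat g^{(z,\rho)}$, not merely on the value of $\hat\pi_0$. The explicit representation of $\hat v_0^{\hat\pi_0,(z,\rho)}$ through the discounted hitting potential $\tilde w$, together with $\norminf{\tilde w}\leq 1$, is what produces the tight $|\rho-\rho^{*}|/(1-\gamma)$ Lipschitz bound on the Q-value gaps; this is precisely what the choice $\xi = (1-\gamma)\eta/4$ absorbs against the $\Omega_\eta$ baseline gap of $\eta$.
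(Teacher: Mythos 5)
The paper does not prove this lemma at all --- it is imported verbatim as Lemma~4 of \cite{Li2020_generator_optimal} --- so there is no in-paper proof to compare against; judged on its own terms, your argument is correct and is essentially the standard one behind that cited result (the absorbing-MDP construction at $z$, the target $\rho^{*}=(1-\gamma)\hat v_0^{\hat\pi_0}(s')$ at which the absorbing model's optimal value coincides with $\hat v_0^{\hat\pi_0}$, and a Lipschitz-in-$\rho$ control of the Q-value gaps that is absorbed by the $\Omega_\eta$ gap via the net spacing $\xi=(1-\gamma)\eta/4$). The case split is right, the representation $\hat v_0^{\hat\pi_0,(z,\rho)}=\hat v_0^{\hat\pi_0}+\frac{\rho-\rho^{*}}{1-\gamma}\tilde w$ checks out by uniqueness of the fixed point defining $\tilde w$, and bounding $\bigl|\tilde w(s)-\gamma\hat p_{s,a}[\tilde w]\bigr|\le 1$ does give the claimed $\eta/4$ correction and hence a strict residual gap, which also delivers the uniqueness needed because $\hat\chi^{(z,\rho)}$ is ``any'' optimal policy. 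Two small points worth making explicit: (i) your $\rho_z$ depends on $\omega$ through $\rho^{*}$; this is consistent with how the lemma is used downstream (Lemma~\ref{lemma:bernstein_(P-P)v} is a union bound over the entire net, so a random selection from $\cN_{(1-\gamma)\eta/4}$ is harmless), but you should say so, since a literal reading of ``there exists $\rho_z$ \ldots for all $\omega$'' suggests a deterministic choice; (ii) the containment $\rho^{*}\in[0,1+\zeta]\subset[-(1-\gamma)\inv,(1-\gamma)\inv]$ needs $\zeta\le 1$, which holds for the parameter choices actually used ($\zeta=(1-\gamma)\epsilon/4$ or $\zeta=(1-\gamma)\tminor/4$ with $\tminor\le(1-\gamma)\inv$) but not for arbitrary $\zeta>0$, so the hypothesis should be recorded.
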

With this decoupling technique and policies $\hat \chi(z,\rho_z)$, we can approximate $v_l^{\hat\pi_0}$ by $v_{l}^{(z,\rho)}$ with $\rho_z \in\cN_{(1-\gamma)\eta/4}$. In particular, the following concentration inequality for $v_{l}^{(z,\rho)}$ can be translated to that of $v_l^{\hat\pi_0}$, leading to our proof of Proposition \ref{prop:hatvhatpi-vhatpi_bd}.

\begin{lemma}[Bernstein's Inequality]\label{lemma:bernstein_(P-P)v}
For each $z\in S\times A$, consider any finite set $U^{(z)}\subset\R$, then w.p. at least $1-\delta$ under $P$, we have that for all $0\leq l\leq l^*$, $z\in S\times A$, $\rho\in U^{(z)}$
\[
\abs{(\hat p_z-p_z)\sqbk{v_{l}^{(z,\rho)}}}\leq \sqrt{\frac{\beta}{n}}\sigma(v_{l}^{(z,\rho)})(z) + \frac{\beta}{n}\spnorm{v_l^{(z,\rho)}}
\]
where 
\[
\beta := 2\log \crbk{\frac{2|S||A|l^*|U|}{\delta}}
\]
and $|U| := \sup_{z\in S\times A}|U^{(z)}|$. 
\end{lemma}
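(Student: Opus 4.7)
The plan is to exploit the independence structure carefully set up just before the lemma: because $v_l^{(z,\rho)}$ is $\cG_z$-measurable (where $\cG_z=\sigma(\hat p_{s,a}:(s,a)\neq z,R)$), it is independent of $\hat p_z$. So I would condition on $\cG_z$, treat $v_l^{(z,\rho)}$ as a fixed vector, and apply a Bernstein bound to the sum $\hat p_z[v_l^{(z,\rho)}] = n^{-1}\sum_{i=1}^n v_l^{(z,\rho)}(S_z^{(i)})$, where the $S_z^{(i)}$ are i.i.d. from $p_z$.

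First, I would observe the standard span-semi-norm trick: since both $\hat p_z$ and $p_z$ are probability measures, $(\hat p_z - p_z)[v] = (\hat p_z - p_z)[v - c\bd 1]$ for every constant $c\in\R$. Choosing $c = (\max_s v(s)+\min_s v(s))/2$ makes $\|v-c\bd 1\|_\infty \leq \spnorm{v}/2$, while leaving the variance unchanged, namely $\var(v(S_z^{(1)})) = \sigma(v)(z)^2$. Bernstein's inequality then yields, for any fixed deterministic $v$ and any $t>0$,
\[
P\bracket{(}{\abs{(\hat p_z-p_z)[v]}> t\,\middle|\,\cG_z}{)} \;\leq\; 2\exp\bracket{(}{-\frac{n t^2/2}{\sigma(v)(z)^2 + \spnorm{v}\, t/3}}{)}.
\]
Inverting this tail bound (i.e. setting the RHS to a prescribed failure probability $\delta'$ and solving the resulting quadratic in $t$) gives the ``Bernstein-type'' two-term bound $t\leq \sqrt{2\sigma(v)(z)^2 \log(2/\delta')/n} + (2/3)\spnorm{v}\log(2/\delta')/n$.

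Next, I would apply this conditional Bernstein bound to $v = v_l^{(z,\rho)}$, which is legal precisely because $v_l^{(z,\rho)}$ is $\cG_z$-measurable and $\hat p_z$ is (conditionally on $\cG_z$) still an empirical distribution of $n$ i.i.d.\ samples from $p_z$. The conditional bound extends to the unconditional probability by taking expectation over $\cG_z$. Then I take a union bound over the finite index set $\{(z,l,\rho): z\in S\times A,\ 0\leq l\leq l^*,\ \rho\in U^{(z)}\}$, which has cardinality at most $|S||A|\,(l^*{+}1)\,|U|$. Assigning failure probability $\delta/(|S||A|\,l^*\,|U|)$ to each event and absorbing the factor-of-2 slack into the constants yields $\beta = 2\log(2|S||A|l^*|U|/\delta)$, which controls both the sub-Gaussian and sub-exponential terms by $\sqrt{\beta/n}\,\sigma(v_l^{(z,\rho)})(z) + (\beta/n)\spnorm{v_l^{(z,\rho)}}$, matching the claimed bound (with a slightly loose constant in front of the second term, which the statement has absorbed into $\beta/n$).

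There is not really a deep obstacle here — the work is all set up by the decoupling construction $\hat\chi^{(z,\rho)}$ and the $\cG_z$-measurability of $v_l^{(z,\rho)}$ established earlier in the appendix. The one subtlety worth being careful about is the use of the \emph{span} semi-norm rather than the $\ell_\infty$ norm in the linear term of Bernstein; this requires explicitly invoking the centering identity $(\hat p_z - p_z)[\bd 1]=0$ before applying Bernstein, as the raw $\ell_\infty$ bound on $v_l^{(z,\rho)}$ would be far too large (it can be as large as $O((1-\gamma)^{-1})$), whereas we need $\spnorm{v_l^{(z,\rho)}}$ — controlled by $\tminor$-type quantities — for the downstream proof of Proposition \ref{prop:hatvhatpi-vhatpi_bd} to go through.
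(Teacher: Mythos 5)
Your proposal is correct and follows essentially the same route as the paper's proof: condition on $\cG_z$, use the $\cG_z$-measurability of $v_l^{(z,\rho)}$ together with the independence of $\hat p_z$ to get a conditionally centered empirical average, apply Bernstein with the range controlled via $(\hat p_z-p_z)[\bd 1]=0$ (the paper bounds the deviation by $2\spnorm{v_l^{(z,\rho)}}$ directly rather than re-centering at the midpoint, a purely cosmetic difference in constants absorbed into $\beta$), invert the tail, and union bound over $(z,l,\rho)$. The minor $(l^*+1)$ versus $l^*$ slack in the union bound that you flag is present in the paper's own proof as well and is immaterial.
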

This lemma is proved in Appendix \ref{a_sec:proof:lemma:bernstein_(P-P)v}. We note that, of course, $\cN_{(1-\gamma)\eta/4}$ will be used in place of $U^{(z)}$. 
\subsection{Proof of Proposition \ref{prop:hatvhatpi-vhatpi_bd}}\label{a_sec:proof:prop:hatvhatpi-vhatpi_bd}
Before we prove Proposition \ref{prop:hatvhatpi-vhatpi_bd}, we introduce the following lemma that converts Bernstein-type inequalities for $\set{v_l}$ as in Lemma \ref{lemma:bernstein_(P-P)v} to concentration bound of $\hat v_0$. 
\begin{lemma}\label{lemma:bernstein_implied_err_bd}
    Fix any $b > 0$ and possibly data dependent policy $\tilde\pi(\omega)\in \Pi$. Let  $ B\subset \Omega$ be s.t. $\forall\omega\in B$, the sequence $\set{v_l^{\tilde\pi}}$ defined in \eqref{eqn:aux_val_seq} satisfies
    \[
    \abs{(\widehat P_{\tilde\pi} - P_{\tilde\pi})v_l^{\tilde\pi}}\leq \sqrt{\frac{b}{n}}\sigma_{\tilde\pi}(v_l^{\tilde\pi}) + \frac{b}{n}\spnorm{v_l^{\tilde\pi}}
    \]
    for all $l\leq l^*= \floor{\frac{1}{2}\log_2((1-\gamma)\inv)}$, where the absolute value is taken entry-wise. Then
    \[
    \norminf{\hat{v}_0^{\tilde\pi}-v_0^{\tilde\pi}} \leq 243\sqrt{\frac{b\tminor }{(1-\gamma)^2n}}
    \]
    on $B$, provided that $n \geq 64b(1-\gamma)\inv $. 
\end{lemma}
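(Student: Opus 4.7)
My approach follows the recursive variance-reduction scheme from \cite{Li2020_generator_optimal}, but combines it with the mixing-based span resolvent bounds from \cite{wang2023optimal} to obtain the sharper $\tminor$-dependence instead of a second factor of $(1-\gamma)\inv$.

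\emph{Step 1 (Perturbation identity).} For each $l\le l^*$, the definitions $\hat v_l^{\tilde\pi} = (I-\gamma \widehat P_{\tilde\pi})\inv h_l^{\tilde\pi}$ and $v_l^{\tilde\pi} = (I-\gamma P_{\tilde\pi})\inv h_l^{\tilde\pi}$ give the identity $\hat v_l^{\tilde\pi} - v_l^{\tilde\pi} = \gamma(I-\gamma \widehat P_{\tilde\pi})\inv (\widehat P_{\tilde\pi} - P_{\tilde\pi}) v_l^{\tilde\pi}$. Since $(I-\gamma \widehat P_{\tilde\pi})\inv = \sum_{t\ge 0}\gamma^t \widehat P_{\tilde\pi}^t$ has nonnegative entries, the assumed entrywise Bernstein inequality propagates to yield
\[
\abs{\hat v_l^{\tilde\pi} - v_l^{\tilde\pi}} \le \gamma(I-\gamma \widehat P_{\tilde\pi})\inv \crbk{\sqrt{b/n}\,\sigma_{\tilde\pi}(v_l^{\tilde\pi}) + (b/n)\spnorm{v_l^{\tilde\pi}}\mathbf{1}}.
\]

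\emph{Step 2 (Scalar recursion via the auxiliary sequence).} The auxiliary sequence in \eqref{eqn:aux_val_seq} is designed so that $h_{l+1}^{\tilde\pi} = \sigma_{\tilde\pi}(v_l^{\tilde\pi})$, which produces the clean identity $(I-\gamma \widehat P_{\tilde\pi})\inv \sigma_{\tilde\pi}(v_l^{\tilde\pi}) = \hat v_{l+1}^{\tilde\pi}$. Together with $(I-\gamma \widehat P_{\tilde\pi})\inv \mathbf{1} = \mathbf{1}/(1-\gamma)$ and the triangle inequality $\norminf{\hat v_{l+1}^{\tilde\pi}}\le \norminf{v_{l+1}^{\tilde\pi}} + \norminf{\hat v_{l+1}^{\tilde\pi} - v_{l+1}^{\tilde\pi}}$, I obtain the scalar recursion
\[
e_l \le \sqrt{b/n}\,(c_{l+1} + e_{l+1}) + (b/n)\,s_l/(1-\gamma),
\]
where $e_l := \norminf{\hat v_l^{\tilde\pi} - v_l^{\tilde\pi}}$, $c_l := \norminf{v_l^{\tilde\pi}}$, and $s_l := \spnorm{v_l^{\tilde\pi}}$.

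\emph{Step 3 (Mixing control of $s_l$ and $c_l$).} Under Assumption \ref{assump:unif_ergodic}, the span resolvent estimate from Theorem 1 of \cite{wang2023optimal} yields $s_l = \spnorm{(I-\gamma P_{\tilde\pi})\inv h_l^{\tilde\pi}} \lesssim \tminor\,\norminf{h_l^{\tilde\pi}}$, while the trivial Neumann bound gives $c_l\le \norminf{h_l^{\tilde\pi}}/(1-\gamma)$. Since a standard deviation is at most half the range, $\norminf{h_{l+1}^{\tilde\pi}} = \norminf{\sigma_{\tilde\pi}(v_l^{\tilde\pi})}\le s_l/2 \lesssim \tminor\,\norminf{h_l^{\tilde\pi}}/2$. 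With the standing hypothesis $\tminor\le(1-\gamma)\inv$, I chain these estimates from $\norminf{h_0^{\tilde\pi}}\le 2$ to control $s_l$ and $c_l$ uniformly along the ladder.

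\emph{Step 4 (Unroll and assemble).} Iterating the recursion in Step 2 from $l=0$ down to $l=l^*$ yields
\[
e_0 \le (\sqrt{b/n})^{l^*+1}e_{l^*+1} + \sum_{l=0}^{l^*}(\sqrt{b/n})^{l+1}c_{l+1} + \frac{b/n}{1-\gamma}\sum_{l=0}^{l^*}(\sqrt{b/n})^l s_l.
\]
The hypothesis $n\ge 64 b (1-\gamma)\inv$ forces $\sqrt{b/n}\le 1/8$, so the geometric series converge. The truncation level $l^* = \floor{\frac{1}{2}\log_2((1-\gamma)\inv)}$ is chosen precisely so that $(\sqrt{b/n})^{l^*+1}\le (1-\gamma)^{3/2}$, which—combined with the crude residual $e_{l^*+1}\lesssim (1-\gamma)\inv$—absorbs the truncation term into the leading $\sqrt{b\tminor/((1-\gamma)^2 n)}$ rate. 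Plugging the bounds from Step 3 into the two remaining sums, the dominant contribution is $\sqrt{b/n}\,c_1 \lesssim \sqrt{b\tminor/((1-\gamma)^2 n)}$, and tight constant tracking delivers the stated constant 243.

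\emph{Main obstacle.} I expect the principal technical difficulty to be in Step 3 and Step 4 jointly: the bound must be organized so that the leading error is $\sqrt{\tminor/((1-\gamma)^2 n)}$ rather than the naïve $\sqrt{(1-\gamma)\inv/((1-\gamma)^2n)} = (1-\gamma)^{-3/2}/\sqrt n$ that would follow from using $\tminor\le(1-\gamma)\inv$ too early. This forces the mixing bound to be applied exactly once per level of the recursion (producing a single factor of $\sqrt{\tminor}$ after the square root), while ensuring the cross-term $(b/n)s_l/(1-\gamma)$ is subdominant. The bookkeeping required to land on the explicit prefactor 243 is the secondary, purely computational hurdle.
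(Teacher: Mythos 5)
Your Steps 1, 2 and 4 coincide with the paper's argument: the same perturbation identity, the same scalar recursion driven by $(I-\gamma\widehat P_{\tilde\pi})\inv\sigma_{\tilde\pi}(v_l^{\tilde\pi})=\hat v_{l+1}^{\tilde\pi}$, and the same unrolling to level $l^*$. The gap is in Step 3, and it is fatal to the stated rate. You control the ladder by chaining $\norminf{h_{l+1}^{\tilde\pi}}=\norminf{\sigma_{\tilde\pi}(v_l^{\tilde\pi})}\leq\spnorm{v_l^{\tilde\pi}}/2\lesssim\tminor\norminf{h_l^{\tilde\pi}}$ together with the Neumann bound $c_l\leq\norminf{h_l^{\tilde\pi}}/(1-\gamma)$. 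This makes the per-level growth factor of $c_l$ of order $\tminor$, which can be as large as $(1-\gamma)\inv$, while the per-level contraction supplied by $n\geq 64b(1-\gamma)\inv$ is only $\sqrt{b/n}\leq\sqrt{1-\gamma}/8$. The product of the two is of order $(1-\gamma)^{-1/2}$, so the series in your Step 4 is increasing rather than geometrically decreasing, and its last term (at $l^*\approx\tfrac12\log_2((1-\gamma)\inv)$) exceeds the target by an unbounded factor as $\gamma\ra 1$. Even the single term you identify as dominant, $\sqrt{b/n}\,c_1$, comes out under your own estimates as $\lesssim\tminor\sqrt{b/((1-\gamma)^2n)}=\sqrt{\tminor}\cdot\sqrt{b\tminor/((1-\gamma)^2n)}$, a factor $\sqrt{\tminor}$ larger than claimed.

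What is missing are two law-of-total-variance refinements that cannot be recovered from ``a standard deviation is at most half the range'' plus $\norm{(I-\gamma P_{\tilde\pi})\inv}_{\infty,\infty}\leq(1-\gamma)\inv$. First, Lemma 11 of \cite{Li2020_generator_optimal} gives $\norminf{(I-\gamma P_{\tilde\pi})\inv\sigma_{\tilde\pi}(v_l^{\tilde\pi})}\leq\frac{4}{\gamma\sqrt{1-\gamma}}\norminf{v_l^{\tilde\pi}}$, i.e.\ $\norminf{v_{l+1}^{\tilde\pi}}\leq\frac{4}{\gamma\sqrt{1-\gamma}}\norminf{v_l^{\tilde\pi}}$: the resolvent applied to a standard deviation gains a factor $\sqrt{1-\gamma}$ over the naive bound, so the per-level growth is only $(1-\gamma)^{-1/2}$ and the series has ratio at most $1/2$. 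Second, Proposition 6.1 and Corollary 6.2.1 of \cite{wang2023optimal} anchor the base of the ladder with $\spnorm{v_0^{\tilde\pi}}\leq 3\tminor$ and, crucially, $\norminf{v_1^{\tilde\pi}}\leq 80\sqrt{\tminor}/(1-\gamma)$ --- note the $\sqrt{\tminor}$ rather than $\tminor$. In the paper the mixing bound is therefore applied exactly once, at the base of the recursion, not once per level as your ``main obstacle'' paragraph suggests; every higher level is handled by the $(1-\gamma)^{-1/2}$ growth from Lemma 11. With these two inputs your Steps 2 and 4 do close up to give $243\sqrt{b\tminor/((1-\gamma)^2n)}$; without them the plan does not deliver the claimed $\tminor$-dependence.
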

The proof of this lemma is provided in Appendix \ref{a_sec:proof:lemma:bernstein_implied_err_bd}. Note the appearance of $\tminor$ in the bound. This is a consequence of the analysis technique in \cite{wang2023optimal}. With the above auxiliary definitions and results, we prove Proposition \ref{prop:hatvhatpi-vhatpi_bd}
\begin{proof}[Proof of Proposition \ref{prop:hatvhatpi-vhatpi_bd}]
We proceed with proving the first bound. 
By Lemma \ref{lemma:replace_pi_by_chi} and \ref{lemma:bernstein_(P-P)v} with $\delta$ replaced by $\delta/3$ and $U^{(z)}$ by $\cN_{(1-\gamma)\eta/4}$, there exists $B\subset\Omega$ s.t. $P(B)\geq 1-\delta/3$ and on $B\cap \Omega_\eta$ 
\begin{align*}
\abs{(\hat p_z - p_z)\sqbk{v_l^{\hat\pi_0}}} 
&= \abs{(\hat p_z-p_z)\sqbk{v_{l}^{(z,\rho_z)}}}\\
&\leq \sqrt{\frac{\beta_\delta(\eta)}{n}}\sigma(v_{l}^{(z,\rho_z)})(z) + \frac{\beta_\delta(\eta)}{n}\spnorm{v_l^{(z,\rho_z)}}\\
&\leq \sqrt{\frac{\beta_\delta(\eta)}{n}}\sigma(v_{l}^{\hat\pi_0})(z) + \frac{\beta_\delta(\eta)}{n}\spnorm{v_l^{\hat\pi_0}}. 
\end{align*}
for all $0\leq l\leq l^* = \floor{\frac{1}{2}\log_2((1-\gamma)\inv)}$, $z\in S\times A$, $\rho\in U^{(z)}$. 
In particular, on $B\cap \Omega_\eta$ 
\[
\abs{(\widehat P_{\hat\pi_0} - P_{\hat\pi_0})v_l^{\hat\pi_0}}\leq \sqrt{\frac{\beta_\delta(\eta)}{n}}\sigma_{\hat\pi_0}(v_{l}^{\hat\pi_0})+ \frac{\beta_\delta(\eta)}{n}\spnorm{v_l^{\hat\pi_0}} 
\]
all $0\leq l\leq l^*$. Therefore, we conclude that by Lemma \ref{lemma:bernstein_implied_err_bd}
\[
\norminf{\hat{v}_0^{\hat\pi_0}-v_0^{\hat\pi_0}} \leq 243\sqrt{\frac{\beta_\delta(\eta) \tminor }{(1-\gamma)^2n}}
\]
on $B\cap \Omega_\eta$. Sinece $P(\Omega_\eta) \geq 1-\delta/3 $, apply union bound, one  obtains the first claimed. 
\par To prove the second claim, we note that
\begin{align*}
0&\leq v^{\pi_0}_0-v_0^{\hat\pi_0}\\
&=  (v^{\pi_0}_0 - \hat v_0^{\pi_0}) + (\hat v_0^{\pi_0} -\hat v_0^{\hat\pi_0}) + (\hat v^{\hat\pi_0}_0 -v^{\hat\pi_0}_0 )\\
&\leq \norminf{v^{\pi_0}_0 - \hat v_0^{\pi_0}}+\norminf{ \hat v^{\hat\pi_0}_0 -v^{\hat\pi_0}_0}
\end{align*}
where the last inequality follows from $\hat v_0^{\pi_0} -\hat v_0^{\hat\pi_0}\leq 0$. Hence, it remains to bound $ \norminf{v^*- \hat v^{\pi^*}}$.
\par It is easy to see that the same proof of Lemma \ref{lemma:bernstein_(P-P)v} implies that
\[
\abs{(\widehat P_{\pi_0}-P_{\pi_0})v_{l}^{\pi_0}}\leq \sqrt{\frac{\beta_\delta(\eta)}{n}}\sigma_{\pi_0}(v_l^{\pi_0}) + \frac{\beta_\delta(\eta)}{n}\spnorm{v_l^{\pi_0}}
\]
for all $0\leq l\leq l^*$ w.p. at least $1-\delta/3$. Therefore, by Lemma \ref{lemma:bernstein_implied_err_bd},
\[
\norminf{v^{\pi_0}_0 - \hat v_0^{\pi_0}}\leq 243\sqrt{\frac{\beta_\delta(\eta) \tminor }{(1-\gamma)^2n}}.
\]w.p. at least $1-\delta/3$. 
Again, an application of the union bound completes the proof. 
\end{proof}

\section{Proofs of Auxiliary Lemmas}

\subsection{Proof of Lemma \ref{lemma:bernstein_(P-P)v}}\label{a_sec:proof:lemma:bernstein_(P-P)v}

\begin{proof}
Recall that $\cG_z := \sigma(\hat p_{s,a}:(s,a)\neq z, R)$. For each $z\in S\times A$, define the probability measure $P_z(\cd) := P(\cd |\cG_z)$ and expectation $E_z[\cd] := E[\cd |\cG_z]$. Fix any $0\leq l\leq l^*$ and $\rho\in U^{(z)}$. Since $(\hat p_z-p_{z})[1] = 0$,
\begin{align*}
\abs{(\hat p_z-p_{z})\sqbk{v_{l}^{(z,\rho)}}}&\leq 2\spnorm{v_{l}^{(z,\rho)}}. 
\end{align*}
As $\hat p_z$ is independent of $\cG_z$, and $v_{l}^{(z,\rho)}$ is measureable w.r.t. $\cG_z$, 
\[
E_z \sqbk{(\hat p_z-p_{z})\sqbk{v_{l}^{(z,\rho)}}} = 0.
\]
Therefore, by Bernstein's inequality, for any $t \geq 0$
\[
P_z\crbk{\abs{(\hat p_z-p_{z})\sqbk{v_{l}^{(z,\rho)}}}>t}\leq 2\exp\crbk{-\frac{n^2t^2}{2n\sigma^2(v_l^{(z,p)})(z) + \frac{4}{3}\spnorm{v_l^{(z,\rho)}}nt}}.
\]
Choose $t$ s.t. the r.h.s. is less than $\delta/(|S||A|l^*|U|)$, we find that it is sufficient for
\[
t\geq \sqrt{\frac{2\log \crbk{\frac{2|S||A|l^*|U|}{\delta}}}{n}}\sigma(v_l^{(z,p)})(z) + \frac{4}{3}\log \crbk{\frac{2|S||A|l^*|U|}{\delta}}\frac{\spnorm{v_l^{(z,\rho)}}}{n}.
\]
Clearly 
\[
t_0 :=  \sqrt{\frac{\beta}{n}}\sigma(v_{l}^{(z,\rho)}) + \frac{\beta}{n}\spnorm{v_l^{(z,\rho)}}
\]
satisfies the above inequality. Therefore, 
\begin{align*}
& P\crbk{\max_{l\leq l^* , z \in S\times A}\max_{\rho\in U^{(z)}}\abs{(\hat p_z-p_{z})\sqbk{v_{l}^{(z,\rho)}}}>t_0}\\
&\leq \sum_{l\leq l^*, z\in S\times A}\sum_{\rho\in U^{(z)}}E  P_z\crbk{\abs{(\hat p_z-p_{z})\sqbk{v_{l}^{(z,\rho)}}}>t_0}\\
&\leq \sum_{l\leq l^*, z\in S\times A}\sum_{\rho\in U^{(z)}}\frac{\delta}{|S||A|l^*|U|}\\
&\leq \delta
\end{align*}
This directly implies the statement of the lemma. 
\end{proof}

\subsection{Proof of Lemma \ref{lemma:bernstein_implied_err_bd}}\label{a_sec:proof:lemma:bernstein_implied_err_bd}
\begin{proof}
First, observe that by the definition in \eqref{eqn:aux_val_seq},
\begin{align*}
        \hat{v}_l^{\tilde\pi}-v_l^{\tilde\pi}
        &= (I-\gamma \widehat P_{\tilde\pi})\inv h_l^{\tilde\pi} - (I-\gamma \widehat P_{\tilde\pi})\inv (I-\gamma \widehat P_{\tilde\pi})v_l^{\tilde\pi} \\
        &= (I-\gamma \widehat P_{\tilde\pi})\inv (I-\gamma  P_{\tilde\pi})v_l^{\tilde\pi} - (I-\gamma \widehat P_{\tilde\pi})\inv (I-\gamma \widehat P_{\tilde\pi})v_l^{\tilde\pi}\\
        &= \gamma(I-\gamma \widehat P_{\tilde\pi})\inv ( \widehat P_{\tilde\pi}- P_{\tilde\pi})v_l^{\tilde\pi}
\end{align*}
Let us define $\Delta_l:= \norminf{\hat{v}_l^{\tilde\pi}-v_l^{\tilde\pi}}$. By the assumption of Lemma \ref{lemma:bernstein_implied_err_bd}, for all $0\leq l\leq l^* $, on the event $B$ in the lemma,
\begin{equation}\label{eqn:Delta_recursive_bd}
\begin{aligned}
\Delta_l &\stackrel{(i)}{\leq}  \norm{\gamma(I-\gamma \widehat P_{\tilde\pi})\inv \abs{( \widehat P_{\tilde\pi}- P_{\tilde\pi})v_l^{\tilde\pi}} }_\infty\\
&\stackrel{(ii)}{\leq}\gamma \sqrt{\frac{b}{n}}\norm{(I-\gamma \widehat P_{\tilde\pi})\inv\sigma_{\tilde\pi}(v_l^{\tilde\pi}) }_\infty + \frac{\gamma b}{(1-\gamma)n}\spnorm{v_l^{\tilde\pi}}\\
&= \gamma \sqrt{\frac{b}{n}}\norm{\hat v^{\tilde\pi}_{l+1} }_\infty  + \frac{\gamma b}{(1-\gamma)n}\spnorm{v_l^{\tilde\pi}}\\
&\leq  \gamma \sqrt{\frac{b}{n}}\Delta_{l+1} +\gamma \sqrt{\frac{b}{n}}\norm{v_{l+1}^{\tilde\pi}}_\infty +  \frac{\gamma b}{(1-\gamma)n}\spnorm{v_l^{\tilde\pi}}
\end{aligned}
\end{equation}
where $(i)$ and $(ii)$ follow from $(I-\gamma \widehat P_{\tilde\pi})\inv$ being non-negative so that $(I-\gamma \widehat P_{\pi})\inv h\leq (I-\gamma \widehat P_{\pi})\inv |h|\leq (I-\gamma \widehat P_{\pi})\inv g$ for all function $h:S\ra \R$ and $g\geq |h|$. 

\par We can think of \eqref{eqn:Delta_recursive_bd} as a recursive bound for $\Delta_0$. To analyze this recursive bound, we first consider the following. By Lemma 11 of \cite{Li2020_generator_optimal}, we have that for $l \geq 0$
\begin{equation}\label{eqn:v_l_norminf_bd}
\begin{aligned}
\norm{v_{l+1}^{\tilde\pi}}_\infty &= \norm{(I-\gamma P_{\tilde\pi})\inv \sigma_{\tilde\pi}(v_{l}^{\tilde\pi})}_\infty\\ 
&\leq \frac{4}{\gamma\sqrt{1-\gamma} }\norm{v_{l}^{\tilde\pi}}_\infty\\
&\leq \ds\\
&\leq \crbk{\frac{4}{\gamma\sqrt{1-\gamma}}}^{l}\norm{v_{1}^{\tilde\pi}}_\infty
\end{aligned}
\end{equation}
Therefore, expanding the recursion \eqref{eqn:Delta_recursive_bd}
\begin{align*}
\Delta_0
&\leq \gamma \sqrt{\frac{b}{n}}\Delta_{1} +\gamma \sqrt{\frac{b}{n}}\norm{v_{1}^{\tilde\pi}}_\infty +  \frac{\gamma b}{(1-\gamma)n}\spnorm{v_0^{\tilde\pi}}\\
&\leq \ds \\
&\leq \crbk{\gamma\sqrt{\frac{b}{n}}}^{l^*}\Delta_{l^*} + \sum_{k=1}^{l^*}\crbk{\gamma\sqrt{\frac{b}{n}}}^{k}\norm{v_{k}^{\tilde\pi}}_\infty+ \frac{\gamma b}{(1-\gamma)n}\sum_{k=0}^{l^*-1}\crbk{\gamma\sqrt{\frac{b}{n}}}^{k}\spnorm{v_{k}^{\tilde\pi}}
\end{align*}
Since $v_{k}^{\tilde\pi}\geq 0$ for all $k\geq 0$, $\spnorm{v_{k}^{\tilde\pi}}\leq \norminf{v_{k}^{\tilde\pi}}$. We have that 
\begin{equation}\label{eqn:err_decomposition}
    \begin{aligned}
        \Delta_0&\leq \crbk{\gamma\sqrt{\frac{b}{n}}}^{l^*}\Delta_{l^*} + \crbk{1+\frac{\gamma b}{(1-\gamma)n}}\sum_{k=1}^{l^*}\crbk{\gamma\sqrt{\frac{b}{n}}}^{k}\norm{v_{k}^{\tilde\pi}}_\infty + \frac{\gamma b}{(1-\gamma)n}\spnorm{v_0^{\tilde\pi}}\\
        &=: E_1+E_2+E_3
    \end{aligned}
\end{equation}
\par We first consider $E_1$. By the identities \eqref{eqn:Delta_recursive_bd} and \eqref{eqn:v_l_norminf_bd}
\begin{align*}
\Delta_{l^*} &\leq  \gamma\sqrt{\frac{b}{n}}\norm{(I-\gamma \widehat P_{\tilde\pi})\inv\sigma_{\tilde\pi}(v_l^{\tilde\pi}) }_\infty   + \frac{\gamma b}{(1-\gamma)n}\spnorm{v_{l^*}^{\tilde\pi}}\\
&\leq  \frac{\gamma}{1-\gamma}\sqrt{\frac{b}{n}}\norm{\sigma_{\tilde\pi}(v_l^{\tilde\pi}) }_\infty   + \frac{\gamma b}{(1-\gamma)n}\spnorm{v_{l^*}^{\tilde\pi}}\\
&\leq \crbk{\frac{\gamma}{(1-\gamma )}\sqrt{\frac{b}{n  }}+ \frac{\gamma b}{(1-\gamma)n}}\norm{ v^{\tilde\pi}_{l^*} }_\infty\\
&\leq \crbk{\frac{\gamma}{(1-\gamma )}\sqrt{\frac{b}{n  }}+ \frac{\gamma b}{(1-\gamma)n}} \crbk{\frac{4}{\gamma\sqrt{1-\gamma}}}^{l^*-1}\norm{v_{1}^{\tilde\pi}}_\infty\\
&\leq \frac{\gamma^2}{4}\crbk{\sqrt{\frac{b}{(1-\gamma)n}} + \frac{b}{(1-\gamma)n}} \crbk{\frac{4}{\gamma\sqrt{1-\gamma}}}^{l^*}\norm{v_{1}^{\tilde\pi}}_\infty\\
&\stackrel{(i)}{\leq} \frac{\gamma^2}{2}\sqrt{\frac{b}{(1-\gamma)n}}  \crbk{\frac{4}{\gamma\sqrt{1-\gamma}}}^{l^*}\norm{v_{1}^{\tilde\pi}}_\infty
\end{align*}
where $(i)$ follows from $n\geq b/(1-\gamma)$. Therefore, 
\begin{align*}
E_1 &\leq \frac{\gamma^2}{2\sqrt{1-\gamma}}\crbk{\frac{16b}{(1-\gamma )n}}^{(l^*+1)/2}\sqrt{\frac{b}{n}}\norm{v_{1}^{\tilde\pi}}_\infty\\
&\stackrel{(i)}{\leq} \frac{1}{\sqrt{1-\gamma}}2^{-(l^*+2)}\sqrt{\frac{b}{n}}\norm{v_{1}^{\tilde\pi}}_\infty\\
&\stackrel{(ii)}{\leq} \frac{1}{\sqrt{1-\gamma}}2^{\log_2(1-\gamma)/2}\sqrt{\frac{b}{n}}\norm{v_{1}^{\tilde\pi}}_\infty\\
&\leq \sqrt{\frac{b}{n}} \norm{v_{1}^{\tilde\pi}}_\infty
\end{align*}
where $(i)$ follows from the assumption that $n\geq 64 b(1-\gamma)\inv$ and $(ii)$ is due to $l^*+2\geq \frac{1}{2}\log_2((1-\gamma)\inv )$. 
\par Next, we bound $E_2$. By \eqref{eqn:v_l_norminf_bd}
\begin{align*}
E_2
&\leq \frac{\gamma\sqrt{1-\gamma}}{2}\sum_{k=1}^{l^*+1}\crbk{\gamma\sqrt{\frac{b}{n}}}^{k}\crbk{\frac{4}{\gamma\sqrt{1-\gamma}}}^{k}\norm{v_{1}^{\tilde\pi}}_\infty \\
&\leq 2\gamma\sqrt{\frac{ b}{n}}\norm{v_{1}^{\tilde\pi}}_\infty\sum_{k=0}^\infty\crbk{\sqrt{\frac{16 b}{(1-\gamma)n}}}^k\\
&\leq 2\sqrt{\frac{ b}{n}}\norm{v_{1}^{\tilde\pi}}_\infty. 
\end{align*}
\par Also, note that $v_0^{\tilde\pi} = (I-\gamma P_{\tilde\pi})\inv r_{\tilde\pi} = v^{\tilde\pi}$ and $v_1^{\tilde\pi} = (I-\gamma P_{\tilde\pi})\inv \sigma_{\tilde\pi}(v^{\tilde\pi})$. 
By Proposition 6.1 and Corollary 6.2.1 of \cite{wang2023optimal} 
\[
\spnorm{v_0^{\tilde\pi}}\leq 3\tminor\quad \text{and} \quad\norm{v_{1}^{\tilde\pi}}_\infty\leq 80\frac{\sqrt{\tminor}}{1-\gamma}.
\]
Thus we conclude that
\begin{align*}
\Delta_0&\leq 3\sqrt{\frac{b}{n}}\norm{v_{1}^{\tilde\pi}}_\infty+  \frac{\gamma b}{(1-\gamma)n}\spnorm{v_0^{\tilde\pi}}\\
&\leq \frac{1}{1-\gamma}\crbk{240 \sqrt{\frac{b\tminor }{ n}}+ \frac{3b\tminor }{n}}\\
&\leq 243 \sqrt{\frac{b\tminor }{ (1-\gamma)^2n}}
\end{align*}
where the last inequality follows from $\tminor\leq (1-\gamma)\inv$ and so $b\tminor /n\leq 1$ by assumption on $n$.
\end{proof}
\end{document}